\theoremstyle{plain}
\newtheorem{theorem}{Theorem}[section]
\newtheorem{condition}[theorem]{Condition}
\newtheorem{proposition}[theorem]{Proposition}
\newtheorem{lemma}[theorem]{Lemma}
\theoremstyle{definition}
\newtheorem{definition}[theorem]{Definition}
\newtheorem{assumption}[theorem]{Assumption}
\theoremstyle{remark}
\newcommand{\mS}{\mathcal{S}}	
\newcommand{\mA}{\mathcal{A}}	
\newcommand{\mB}{\mathcal{B}}	
\newcommand{\mI}{\mathcal{I}}	
\newcommand{\mD}{\mathcal{D}}	
\newcommand{\mR}{\mathcal{R}}	
\newcommand{\indic}{\mathds{1}} 
\newcommand{\PP}{\mathds{P}}    
\newcommand{\Eps}{\mathcal{E}}
\newcommand{\Exs}{\mathbb{E}}
\newif\ifdraft
\icmltitlerunning{Coordinated Attacks against Contextual Bandits}
\begin{document}

\twocolumn[
\icmltitle{Coordinated Attacks against Contextual Bandits: Fundamental Limits \\ and Defense Mechanisms}

%
%
%
%



\icmlsetsymbol{equal}{*}

\begin{icmlauthorlist}
\icmlauthor{Jeongyeol Kwon}{ut}
\icmlauthor{Yonathan Efroni}{msr}
\icmlauthor{Constantine Caramanis}{ut}
\icmlauthor{Shie Mannor}{tech}
\end{icmlauthorlist}

\icmlaffiliation{ut}{Department of Electrical and Computer Engineering, University of Texas at Austin, USA}
\icmlaffiliation{tech}{Technion, Israel}
\icmlaffiliation{msr}{Microsoft Research, New York}
\icmlcorrespondingauthor{Jeongyeol Kwon}{kwonchungli@utexas.edu}
\icmlcorrespondingauthor{Yonathan Efroni}{jonathan.efroni@gmail.com}

\icmlkeywords{Machine Learning, ICML}

\vskip 0.3in
]



\printAffiliationsAndNotice{}  

\begin{abstract}
    Motivated by online recommendation systems, we propose the problem of finding the optimal policy in multitask contextual bandits when a small fraction $\alpha < 1/2$ of tasks (users) are arbitrary and adversarial. The remaining fraction of good users share the same instance of contextual bandits with $S$ contexts and $A$ actions (items). Naturally, whether a user is good or adversarial is not known in advance. The goal is to robustly learn the policy that maximizes rewards for good users with as few user interactions as possible. 
    Without adversarial users, established results in collaborative filtering show that $O(1/\epsilon^2)$ per-user interactions suffice to learn a good policy, precisely because information can be shared across users. This parallelization gain is fundamentally altered by the presence of adversarial users: unless there are super-polynomial number of users, we show a lower bound of $\tilde{\Omega}(\min(S,A) \cdot \alpha^2 / \epsilon^2)$ {\it per-user} interactions to learn an $\epsilon$-optimal policy for the good users. 
    We then show we can achieve an $\tilde{O}(\min(S,A)\cdot \alpha/\epsilon^2)$ upper-bound, by employing efficient robust mean estimators for both uni-variate and high-dimensional random variables. We also show that this can be improved depending on the distributions of contexts.  
\end{abstract}

\section{Introduction}
Online recommendation systems \cite{adomavicius2005toward, li2010contextual} are ubiquitous, and used in diverse platform applications including video streaming, online shopping, travel and restaurant recommendations. 
These are large-scale systems, designed for millions of users and items. Thus, successful algorithms exploit shared preferences across different users (as in collaborative filtering). This naturally leads to a multitask contextual bandit framework \cite{sarwar2001item, maillard2014latent, sen2017contextual, chawla2020gossiping, yang2020impact, ghosh2021collaborative}. 
Manipulation of collaborative filtering recommendation engines is well documented, and various approaches to building resilient algorithms have been considered e.g., \cite{olsen2002amazon,van2010manipulation,chen2015matrix}. The ability to learn across users changes dramatically even in the presence of a small fraction of adversarial users, thus posing fundamental new challenges. This is precisely the problem we address.

\paragraph{Problem Setup.} We consider a multitask contextual bandit framework where the majority of users share preferences over all contexts and items. We call these the ``good'' users. Some $\alpha$-fraction of users ($\alpha < 1/2$), however, may have arbitrary preferences or may even attempt to manipulate the recommendation system; we call these the ``adversarial'' users. 

Formally, our system consists of $L$ users, a set of contexts $\mS$, and a set of actions $\mA$. Contexts are the temporal status of the user such as browsing histories, keywords, categories, etc.\ and the key assumption is that the good users share preferences across contexts, thus enabling cross-user learning (collaborative filtering). We note that, in particular, the user ID is a unique identifier, and not part of the context (in contrast to, e.g., \cite{sen2017contextual, deshmukh2017multi}). 
At each step, a user $i \in [L]$ arrives and reveals its current status (context) $s \in \mS$. Then, based on the context, we select an action $a \in \mA$ and get a reward (feedback) from the user. If the user is good, then the mean-reward returned by this user is $\mu(s,a) \in \mathbb{R}$. If the user is adversarial, it can return any reward based on all interaction histories. 

\paragraph{Goal of the Paper.} Our goal is to find a provably-approximately-correct (PAC) policy for the majority of good users, with the minimum per-user interactions. 
Without adversarial users, {\it i.e.,} assuming all users are share preferences across items and contexts, the problem is the well-studied contextual bandit problem \cite{lu2010contextual, chu2011contextual}. Classical results show that as soon as we have enough users, $L \ge SA$, the common preferences can be exploited, and existing contextual bandit algorithms  \cite{auer2002finite, zhou2015survey} can find a good policy after $O(SA/L)\sim O(1)$ per-user interactions. That is, we can achieve $O(1/L)$ {\it parallelization gain}, the ratio between required per-user interactions with a single user ($L = 1$) and many users. 

However, exploiting common preferences to reduce per-user interactions makes us susceptible to manipulation: a small but constant fraction $\alpha = O(1)$ of adversarial users can easily manipulate the state of the art contextual bandit algorithms. Moreover, as we show, a full parallelization gain of $O(1/L)$-factor is not possible even with a single context, for any algorithm if $\alpha > 0$. On the other hand, we can always  abandon parallelization gain, and learn an optimal policy for each user, with $O(SA/\epsilon^2)$ per-user interactions, thus learning is indeed possible. Our goal is to investigate the extent to which we can beat this naive baseline, while maintaining robustness in the presence of adversarial users.

\paragraph{Main Results.} We show that using recent ideas from robust statistics, we can design a robust contextual bandit algorithm that requires $O(\min(S,A) \cdot \alpha / \epsilon^2)$ per-user interactions, i.e., it achieves a $O(\alpha/\max(S,A))$ parallelization gain. As this falls short of the $L\cdot A$ parallelization gain in the absence of manipulators, a fundamental question is whether this is the best possible we can achieve. We show that the scaling in $A$ and $S$ cannot be improved, and we give a lower bound that matches up to one factor of $\alpha$. We summarize our results:
\begin{itemize}
    \item {\bf Robust Algorithm:} We initially consider two regimes: $A \gg S$, and $S \gg A$. In the first setting, we show that partitioning the users into $A$ groups, and within each group playing the same arm $1/\epsilon^2$ times. Then using median estimator, we show we can obtain $\epsilon$-accurate estimates of each arm in each context, thus giving a $O(1/A)$ parallelization gain. When $S \gg A$, we can do better than $(1/A)$ gain. Yet we need a different idea, since we may not see the same user--context pair enough times. Instead, we show we can leverage recent results in high dimensional mean estimation (e.g., \cite{diakonikolas2019robust}), to get a $O(1/S)$ parallelization gain.
    \item {\bf Fundamental Limits:} Can the $\max(S,A)$ parallelization gain be further improved once sufficiently many users $L = \Omega(SA \log (SA))$ are given? We show that, perhaps surprisingly, $\tilde{\Omega}(\min(S,A) \cdot \alpha^2/\epsilon^2)$ per-user sample complexity cannot be improved unless super-polynomial number of users are available. This is the first kind of negative results of learning contextual bandits in multitask settings (Section \ref{section:lower_bound}). 
    \item {\bf Problem Dependent Results:} In more practical scenarios, a certain set of contexts might be more frequently observed. For instance, there can be certain genres, keywords or categories that are more popular and frequently searched by users, while less popular contexts are rarely searched and thus less critical for the overall performance. We show that we can unify efficient uni-variate and high-dimensional robust estimators, and obtain the improved problem-dependent per-user sample-complexity (Section \ref{section:main_algorithm}). 
\end{itemize}

\subsection{Comparison to Previous Work}
Multitask contextual bandits have attracted significant recent attention, for a variety of applications. We only review the most closely related theoretical results on this topic. 

\paragraph{Multitask Contextual Bandits.} Multitask contextual bandits have been considered largely in two problem settings: (i) tasks of all user preferences are assumed to be embedded in low dimensional spaces \cite{sen2017contextual, gopalan2016low, yang2020impact, hu2021near}, (ii) users can be clustered into a small number of groups (compared to the number of users) that share the same instance of contextual bandits \cite{maillard2014latent, gentile2014online, gentile2017context, ghosh2021collaborative}. In the former setting (i), low-dimensional representations of tasks are the key objective to recover, and this has been done with general-purpose techniques such as tensor-decomposition or low-rank factorization. 
The adversarial users destroy this low dimensionality, hence these methods are not directly applicable to our setting. The latter setting (ii), parallelization gain leverages preference similarity within clusters \cite{maillard2014latent, gentile2014online, gentile2017context, ghosh2021collaborative}. While our problem is more closely related to this setting, again the challenge comes from the adversarial users. As their behavior can be arbitrary, we thus may have $O(L)$ unbalanced clusters among $L$ users, thus destroying our ability to leverage clustering. And indeed, the referenced work requires a small (in fact, a constant) number of clusters. We note that even in our work, we never identify the good vs.\ adversarial users.

\paragraph{Corruption Robust Bandits.} Corruption robust bandit algorithms \cite{gupta2019better, lykouris2018stochastic, lykouris2021corruption, ma2021adversarial, liu2021cooperative} consider the setting where the reward feedback can be corrupted at any time by an adversary with limited budget. There is little that can be done when the corruption budget scales linearly with the number of interactions. If we cast our problem in this setting, then though the corruption budget scales linearly, our adversary is limited and cannot corrupt the rewards of the good users. This additional information eventually allows us to learn the best policy for the underlying contextual bandit even with large total amount of corruptions. Necessarily, the strategies we develop are fundamentally different than those in the works references above.

\paragraph{Exploration in Contextual Bandits.} There is a long line of work that studies efficient exploration algorithms for contextual bandits (and reinforcement learning) in both stochastic and adversarial reward settings \cite{sutton2018reinforcement, auer2002finite, abbasi2011improved, even2006action, lattimore2020bandit, audibert2009minimax, bubeck2012best, gerchinovitz2016refined}. In the adversary-free setting, {\it i.e.,} $\alpha = 0$, we can use any of these algorithms, ignoring the user identifier. However, with $\alpha = O(1)$ fraction of adversarial users, algorithms that ignore user identifiers can be easily manipulated.

\paragraph{Other Related Work.} Contextual bandits have also been studied when rewards are represented as a linear combination of features of users and action items \cite{abbasi2011improved, chu2011contextual}. In this work, we do not assume that any prior information (such as the features of context-actions) is given in advance. Another line of work considers regret-minimization problem in latent bandits/MDPs where we can interact with each user for a fixed short time-horizon \cite{brunskill2013sample, kwon2021rl, zhou2021regime, kwon2021reinforcement}. In contrast, we allow as many interactions with each user as needed, while minimizing the number of per-user interactions.

\section{Preliminaries}
\label{section:prelim}

We consider a setting with $L$ users. Of these, a $(1-\alpha)$-fraction, called {\em good} users, have the same preferences, and hence follow a multitask contextual bandit framework $\mB := (\mS, \mA, \nu, \mR)$. There are $S$ contexts $\mS$ drawn according to distribution $\nu$, and $A$ actions $\mA$. The rewards follow the same distribution $\mR(s,a)$ under context $s$ and action $a$. We use $\mu(s,a)$ to denote the vector of mean rewards for the good users, and we assume that the reward distribution has bounded mean and variance: $|\mu(s,a)| \le 1$ and $\Exs_{r \sim \mR(s,a)}[(r - \mu(s,a))^2] \le 1$. The remaining $\alpha$-fraction of users are adversarial, and do not follow model $\mB$: their returns are not bound by any distribution, and can in fact be a function of the history to that point. 
Our goal is to find a policy $\pi \in \Pi: \mS \rightarrow \Delta(\mA)$ that maximizes the expected reward for the good users, {\it i.e.,} $\mB$:
\begin{align*}
    V_\mB^* = \max_{\pi \in \Pi} V_\mB^\pi := \Exs_{s \sim \nu} [\mu{(s,a)} | a \sim \pi(s)].
\end{align*}

Let $\mathcal{I}^* \subseteq [L]$ be the set of good users where $|\mI^*| \ge (1-\alpha) L$. Our interaction model is defined as follows: 
\begin{enumerate}
    \item At each step $t \in \mathbb{N}$, nature selects a user $i_t \in [L]$ according to some unknown process $\Gamma$ which call the user-arrival model. The only restriction on $\Gamma$ is that the difference in user frequency is controlled -- see Assumption \ref{assumption:user_selection} below.
    \item If the user is good ($i_t \in \mI^*$), then the user samples a context $s_t$ from $\nu$. Then, the algorithm selects an action $a_t$ based on $s_t$, and gets reward $r_t$ sampled from $\mR(s_t,a_t)$. 
    \item If the user is adversarial ($i_t \notin \mI^*$), then the adversary chooses an arbitrary context $s_t \in \mS$, and returns an arbitrary reward $r_t \in \mathbb{R}$.
\end{enumerate}

The adversarial users can choose any $s_t \in \mS$ and $r_t \in \mathbb{R}$ based on all histories, the underlying bandit problem $\mB$ for good users, and the algorithm, and in particular, they can coordinate. Though the user IDs are fixed, we do not know which users are good and which are adversarial. Moreover, we have no prior information on the underlying shared task of good users in advance. 

The only assumption we require for the arrival process across all users, is that the difference in user frequency is controlled. 
\begin{assumption}
    \label{assumption:user_selection}
    Let $n_i(t)$ be the number of times that the $i^{th}$ user has interacted with the environment up to time step $t$. Then, there exists a universal constant $c_L = O(1)$ such that for all $t \ge L$, $\max_{i \in [L]} n_i(t) / \min_{i \in [L]} n_i(t) \le c_L$.
\end{assumption}
{\bf Remark}. The exact constant $c_L$ factors into all of our results as a linear multiplier, as we simply need to wait for the slowest user to accumulate the required interactions. To simplify the notation, we simply take it to equal 1. 
Our goal is to find an $(\epsilon, \delta)$ provably-approximately-correct (PAC) optimal policy, which we refer as a near-optimal policy, defined as follows:
\begin{definition}
    An algorithm is $(\epsilon, \delta)$-PAC if it returns a policy $\hat{\pi}$ such that
    $\PP(V_\mB^* - V_\mB^{\hat{\pi}} \le \epsilon) \ge 1 - \delta$.
\end{definition}
Note that the optimal policy $\pi^*$ is a stationary policy that satisfies $\pi^*(s) = arg \max_{a \in \mA} \mu(s,a)$. The performance of the algorithm is measured in terms of {\it the number of per-user interactions:} $N = T/L$ where $T$ is the total number of steps the system has taken, to return an $(\epsilon, \delta)$-PAC policy. We will only consider the case $\epsilon < \alpha$, as otherwise the problem is straightforward ({\it e.g.,} we can simply ignore adversaries).

\paragraph{Notation.} Let $Unif(\mA)$ be a uniform distribution over $\mA$.
We use $B(p)$ to denote a Bernoulli distribution with parameter $p \in [0,1]$. For any probability distribution $\mD$, we use $\mD^{\bigotimes n}$ to mean a $n$-product distribution of $\mD$. $d_{TV} (\mD_1, \mD_2)$ is a total-variation distance between two probability distributions $\mD_1, \mD_2$. We interchangeably use $\nu$ and $\mu$ as probability distributions and vectors of probabilities.  

\section{Warm Up: Two Base Cases}
\label{section:baseline}

In this section, we focus on two base cases when either $(i)$ the number of contexts is small, or $(ii)$ the number of actions is small. We develop an $(\epsilon, \delta)$-PAC algorithm robust to adversarial users for arbitrary accuracy $\epsilon > 0$ and small failure probability $\delta > 0$. Together, these two results show that  
$\tilde{O}(\min(S,A) \cdot \alpha / \epsilon^2)$ number of per-user interactions are enough to obtain an $\epsilon$-optimal policy. This result sets the stage for one of the main results of this paper, that establishes a nearly matching lower bound (Section \ref{section:lower_bound}).


\subsection{Multi-Armed Bandit: $S = O(1)$, $A \gg 1$} \label{sec: univar estimator}

We first consider a special case of contextual bandits, with very few ($O(1)$) contexts; thus $\mB$ is essentially a multi-armed bandit problem with many arms. For this case, we can use any efficient {\it univariate robust estimator}, to calculate the mean-reward of each arm. Once we have that, we can play (nearly) optimally. 

The details are as follows. First, assume $S = 1$. Divide the $L$ (good and adversarial) users randomly into $A$ groups, $\mI_a$, $a \in \mA$. Whenever we see a user from $\mI_a$, we play $a \in \mA$. After $T$ total plays, we compute the empirical mean of each user's rewards. Then we take $\hat{\mu}(a)$ to be the $\alpha$-trimmed-mean of the empirical estimates of each user in $\mI_a$ \cite{lugosi2021robust}. 
Precisely, we have the following result. 
\begin{proposition}
    \label{lemma:base_univariate}
    Let the number of users be at least $L = \Omega(A \log A / \alpha)$ and the adversarial rate be $\alpha < 1/3$. Let $\hat{\mu}$ denote the vector of mean estimates, produced by the procedure outlined above. Then after $T / L = O \left(\alpha / \epsilon^2 \right)$ per-user interactions, with probability at least $9/10$, $|\hat{\mu} (a) - \mu (a)| \le \epsilon$ for all $a \in \mA$. 
\end{proposition}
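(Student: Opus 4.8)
The plan is to combine a balanced-partition argument with a standard analysis of the $\alpha$-trimmed mean for robust univariate estimation, and then take a union bound over the $A$ arms.

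is to reduce everything to a single-arm robust mean estimation problem, analyze that, and then union-bound over arms. Let me think through the structure.

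First, the partition step: we randomly split $L$ users into $A$ groups $\mathcal{I}_a$. Since the adversarial users are an $\alpha$-fraction of the total, I need to argue that with high probability, each group $\mathcal{I}_a$ has (a) roughly $L/A$ users, and (b) at most, say, a $2\alpha$ or $3\alpha$ fraction of adversarial users within it. This is a Chernoff/binomial concentration argument: each group has expected size $L/A$, and with $L = \Omega(A \log A / \alpha)$, the expected number of adversarial users per group is $\Omega(\log A / \alpha) \cdot \alpha = \Omega(\log A)$ — wait, I need to be careful. Actually the expected number of adversaries per group is $\alpha L / A = \Omega(\log A)$, which is large enough that the fraction concentrates around $\alpha$ up to constant multiplicative error, and a union bound over the $A$ groups (costing the $\log A$) keeps the failure probability small. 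So after this step, with probability $\geq 19/20$ say, every group has size $\Theta(L/A)$ and adversarial fraction at most $3\alpha < 1$ (using $\alpha < 1/3$, so $3\alpha < 1$; but we actually want it bounded below $1/2$ for the trimmed mean to work — hmm, let me reconsider; maybe we want the split to be $\alpha$ into $\alpha' \le (1+o(1))\alpha$, which holds since the fluctuation is lower order).

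Second, the per-user estimation step: within group $\mathcal{I}_a$, each good user plays arm $a$ some $n = \Theta(T/L)$ times (by Assumption \ref{assumption:user_selection}, frequencies are balanced, so each user gets $\Theta(N)$ pulls where $N = T/L$). Each good user's empirical mean of $n$ i.i.d. rewards with mean $\mu(a)$ and variance $\le 1$ has, by Chebyshev (or sub-Gaussian-type bounds if we want, but variance-bounded is enough for trimmed mean), fluctuation $O(1/\sqrt{n})$ typically. The $\alpha$-trimmed mean (trim top and bottom $\alpha'$-fraction, with $\alpha'$ a constant multiple of $\alpha$ exceeding the true corruption fraction in the group) of the per-user empirical means is then, by the standard robust mean estimation guarantee \cite{lugosi2021robust} for variance-bounded distributions, within $O(\sqrt{\alpha' \sigma_{\mathrm{eff}}^2})$ of the true mean $\mu(a)$ with constant probability — but wait, I need to identify $\sigma_{\mathrm{eff}}^2$. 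The per-user empirical means form a sample whose distribution has variance $\mathrm{Var}(\hat\mu_i) \le 1/n$. So trimmed mean error is $O(\sqrt{\alpha/n})$ roughly, plus the sub-Gaussian-type deviation term $O(\sqrt{1/(n m)} \cdot \sqrt{\log(1/\delta)})$ where $m = |\mathcal{I}_a| = \Theta(L/A)$ is the number of users in the group. To make the dominant term $O(\sqrt{\alpha/n}) \le \epsilon$, we need $n = \Omega(\alpha/\epsilon^2)$, i.e., $N = T/L = \Theta(\alpha/\epsilon^2)$ — which is exactly the claimed bound. The second term is lower-order provided $m$ is large enough, which the assumption on $L$ guarantees.

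Third, union bound: we need $|\hat\mu(a) - \mu(a)| \le \epsilon$ simultaneously for all $a \in \mathcal{A}$. The trimmed-mean guarantee naturally gives a $\delta$-dependent tail of the form error $\le O(\sqrt{\alpha/n} + \sqrt{\log(1/\delta)/(nm)})$ with probability $1-\delta$; set $\delta = 1/(20 A)$ and union bound. The $\log A = \log(1/\delta)$ factor is absorbed since $m = \Theta(L/A) = \Omega(\log A / \alpha)$, making the second term $O(\sqrt{\alpha \log A / (n \log A)}) = O(\sqrt{\alpha/n}) \le \epsilon$. Combined with the $19/20$ from the good partition event, total success probability is $\ge 9/10$.

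**The main obstacle**, I expect, is bookkeeping the interaction between the three sources of randomness/error cleanly — in particular, making sure the partition-quality event (adversarial fraction per group $\le \alpha'$ with $\alpha' < 1/2$ a fixed constant multiple of $\alpha$) holds with the right probability after union-bounding over $A$ groups, and that the resulting constant $\alpha'$ is still small enough that the trimmed-mean breakdown/accuracy guarantee of \cite{lugosi2021robust} applies. There's a minor subtlety that the number of per-user pulls $n_i$ is not exactly equal across users but only balanced up to $c_L = O(1)$ (Assumption \ref{assumption:user_selection}), so "empirical mean of $n_i$ samples" has a user-dependent variance $\le 1/n_i \le c_L/\bar n$; this only changes constants. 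I would also need to confirm that we don't need sub-Gaussian rewards: the trimmed-mean analysis for bounded-variance samples (which is what \cite{lugosi2021robust} provides) suffices, because we only need the $O(\sqrt{\alpha/n})$ bias-type term plus a Chebyshev-type concentration of the trimmed mean, and the per-user empirical means do concentrate at rate $1/\sqrt{nm}$ by independence across the $m$ good users in the group.
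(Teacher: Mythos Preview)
Your proposal is correct and follows essentially the same route as the paper's own proof: argue via Chernoff that each of the $A$ random groups has size $\Theta(L/A)$ and adversarial fraction at most a small constant multiple of $\alpha$, observe that each good user's empirical mean has variance $O(L/T)$ under Assumption~\ref{assumption:user_selection}, apply the $\alpha$-trimmed-mean guarantee of \cite{lugosi2021robust} (Theorem~\ref{theorem:univariate_robust_mean}) to obtain $|\hat\mu(a)-\mu(a)|\le O(\sqrt{\alpha L/T})$ per arm, and union-bound over the $A$ arms using failure probability $O(1/A)$. The paper's version is terser and additionally remarks that for larger $\alpha$ one may use median-of-means instead, but the structure and key estimates are identical to yours.
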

Once we obtain a set of estimated mean-rewards, our policy is immediate: pick the arm $\hat{a} = arg\max_{a \in \mA} \hat{\mu}_a$. This is a $(2\epsilon, 0.1)$-PAC guaranteed policy. 

The procedure can be easily extended to constantly many contexts $S = O(1)$: for each context, we play the same procedure independently, {\it i.e.,} a user arrival $i_t$ with different $s_t$ can be handled in a separate procedure. This would require $T/L = O(\alpha S / \epsilon^2)$ per-user interactions (assuming a uniform distribution over contexts) to obtain an $\epsilon$-optimal policy, therefore achieving the $O(1/A)$ parallelization gain. 

The proof of the above proposition requires us to show that the median-of-means estimator can obtain an $\epsilon$-accurate estimate in the face of as much as an $\alpha$-fraction of corruptions. Thus, it is essentially an immediate corollary of, for instance, the following result from \cite{lugosi2021robust}, which directy applies to the trimmed-mean estimator:
\begin{theorem}[Theorem 1 in \cite{lugosi2021robust}]
    \label{theorem:univariate_robust_mean}
    Let $\mD$ be a distribution on $\mathbb{R}$ with unknown mean $\mu$ and finite variance $\sigma^2$. Let $\alpha < 1/3$. Given an $\alpha$-corrupted set of $L = \Omega(\log(1/\delta) / \alpha)$ samples drawn from $\mD$, then the $\alpha$-trimmed mean produces $\hat{\mu}$ such that with probability at least $1 - \delta$, we have $|\hat{\mu} - \mu| \le O(\sigma \sqrt{\alpha})$.
\end{theorem}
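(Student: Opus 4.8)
The plan is to prove the one-sided bound $\hat\mu \le \mu + O(\sigma\sqrt{\alpha})$ with probability at least $1 - \delta/2$; the matching lower bound follows by running the identical argument on the negated samples, and a union bound then gives $|\hat\mu - \mu| \le O(\sigma\sqrt\alpha)$ with probability at least $1 - \delta$. Throughout, write $G \subseteq [L]$ for the unknown set of at least $(1-\alpha)L$ clean samples and let the trimming fraction be $\phi = \Theta(\alpha)$, taken as a fixed constant multiple of $\alpha$ large enough to dominate the $\le \alpha L$ corruptions (the constant only affects the hidden $O(\cdot)$ factor). The governing intuition is that a finite second moment localizes the relevant quantiles within $O(\sigma/\sqrt\alpha)$ of $\mu$, while trimming or clipping at that width incurs only $O(\sigma\sqrt\alpha)$ bias. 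The crucial structural point is that under a variance bound alone the raw sample mean need not concentrate, so the whole argument must be carried out on clipped variables.

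First I would localize the empirical quantiles. Let $\hat q_+$ be the $(1-\phi)$ empirical quantile of the corrupted sample. Since at most $\alpha L$ points are corrupted, the number of sample points exceeding any fixed level $q$ is at most the number of clean points exceeding $q$ plus $\alpha L$. Choosing $q = q^\star$ to be the population $\big(1 - (\phi-\alpha)/2\big)$-quantile of $\mD$, a Chernoff bound on the Binomial count of clean points above $q^\star$ together with $L = \Omega(\log(1/\delta)/\alpha)$ shows that with probability at least $1 - \delta/8$ fewer than $\phi L$ sample points exceed $q^\star$, whence $\hat q_+ \le q^\star$. Chebyshev's inequality gives $q^\star \le \mu + \sigma/\sqrt{(\phi-\alpha)/2} = \mu + O(\sigma/\sqrt\alpha)$. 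Symmetrically $\hat q_- \ge \mu - O(\sigma/\sqrt\alpha)$, so with high probability the trimming window $[\hat q_-, \hat q_+]$ has width $R = O(\sigma/\sqrt\alpha)$ and contains $\mu$.

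Next I would relate the trimmed mean to a clipped clean mean. Write the retained points as a disjoint union of clean retained indices $R_G$ and corrupted retained indices $R_B$, with $|R_B| \le \alpha L$. Each point of $R_B$ lies in $[\hat q_-, \hat q_+]$ and so deviates from $\mu$ by at most $R$; summing and dividing by the $(1-2\phi)L$ retained points contributes at most $\alpha L \cdot R / ((1-2\phi)L) = O(\sigma\sqrt\alpha)$. For the clean part, each retained clean point satisfies $Z_i = \mathrm{clip}(Z_i, \hat q_-, \hat q_+)$, while each trimmed clean point contributes exactly $\hat q_\pm$ under clipping; hence the clean retained sum equals $\sum_{i\in G}\mathrm{clip}(Z_i,\hat q_-,\hat q_+)$ minus the clipped values of the $O(\phi L)$ trimmed clean points, a correction of size $O(\phi L)\cdot R/L = O(\sigma\sqrt\alpha)$. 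Thus up to $O(\sigma\sqrt\alpha)$ it suffices to control the clipped clean mean $\tfrac1L\sum_{i\in G}\mathrm{clip}(Z_i, \hat q_-, \hat q_+)$.

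Finally I would bound this clipped clean mean by a bias--variance split against the fixed levels $\mu \pm R$. The bias $\big|\Exs[\mathrm{clip}(X,\mu-R,\mu+R)] - \mu\big|$ is at most $\Exs[(X-\mu)_+\indic(X-\mu > R)] + \Exs[(\mu - X)_+\indic(\mu - X > R)]$, which by Cauchy--Schwarz and Chebyshev is $O(\sigma^2/R) = O(\sigma\sqrt\alpha)$. The clipped variables are bounded with range $O(R)$ and variance at most $\sigma^2$, so Bernstein's inequality gives a fluctuation of order $\sqrt{\sigma^2 \log(1/\delta)/L} + R\log(1/\delta)/L = O(\sigma\sqrt\alpha)$ under $L = \Omega(\log(1/\delta)/\alpha)$. \textbf{The main obstacle} is that the clip levels $\hat q_\pm$ are data-dependent rather than the fixed $\mu \pm R$, so this concentration must hold uniformly: I would either union-bound Bernstein over a fine grid of clip levels covering the $O(\sigma/\sqrt\alpha)$-window (folding the grid cardinality into the $\log(1/\delta)$ budget) or exploit monotonicity of the clipped mean in the clip levels to sandwich it between two fixed-level truncations. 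A union bound over the quantile-localization, Bernstein, and grid events with total failure probability $\delta$ then completes the one-sided estimate, and hence the stated bound $|\hat\mu - \mu| \le O(\sigma\sqrt\alpha)$.
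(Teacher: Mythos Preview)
The paper does not prove this statement; it is quoted as Theorem~1 of \cite{lugosi2021robust} and used as a black box, so there is no in-paper argument to compare against. Your sketch follows the standard trimmed-mean analysis and is correct in outline: localize the empirical trimming thresholds inside $[\mu-R,\mu+R]$ with $R=O(\sigma/\sqrt\alpha)$ via Chebyshev on the population quantiles plus a binomial tail bound, absorb the at most $\alpha L$ retained corrupted points and the $O(\phi L)$ trimmed clean points at cost $O(\phi R)=O(\sigma\sqrt\alpha)$ each, and finish with a bias--Bernstein split on a clipped clean mean.

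One remark: the ``main obstacle'' you flag is self-inflicted. In step~2 you pass to $\sum_{i\in G}\mathrm{clip}(Z_i,\hat q_-,\hat q_+)$ with the \emph{random} thresholds, which then forces the grid or monotonicity gymnastics in step~3. But on your good event you already have $[\hat q_-,\hat q_+]\subseteq[\mu-R,\mu+R]$, so every retained point satisfies $Y_i=\mathrm{clip}(Y_i,\mu-R,\mu+R)$ at the \emph{fixed} levels $\mu\pm R$. Running the identical step-2 accounting with those fixed levels (the trimmed-clean correction is still at most $|G\setminus R_G|\cdot R=O(\phi L R)$, since $|\mathrm{clip}(Z_i,\mu-R,\mu+R)-\mu|\le R$ regardless of where $Z_i$ falls) reduces step~3 to a single Bernstein application on the deterministic-threshold variables $\mathrm{clip}(Z_i,\mu-R,\mu+R)-\mu$, with no uniformity issue whatsoever. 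Your grid suggestion is more delicate than it appears: the bias $\big|\Exs[\mathrm{clip}(X,a,b)]-\mu\big|$ is not $O(\sigma\sqrt\alpha)$ uniformly over $(a,b)$ in the window---it can be $\Theta(\sigma/\sqrt\alpha)$ when $a$ and $b$ both lie on the same side of $\mu$, and you have established no bound preventing this for $\hat q_\pm$. The fixed-threshold route sidesteps the issue entirely and is how the cited reference proceeds.
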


\subsection{Many Contexts, A Few Actions: $S \gg 1$, $A = O(1)$} \label{sec: robust high dimension}
The second baseline is a contextual bandit case with many contexts and constant number of arms. Let us assume here that $\nu(s) = 1/S$, {\it i.e.,} uniform over all contexts. The main challenge here is that we cannot estimate a mean-reward accurately enough from a single user for a fixed state-action pair, since the same context is unlikely to be seen more than once unless we interact with the user long enough (that is, unless $T/L \ge S$). Therefore, a univariate robust estimation approach would not work in this case.


In this scenario, we can use a {\it robust high-dimensional estimator} ({\it e.g.,} \citet{cheng2019high, diakonikolas2019robust}) to estimate mean-rewards. Specifically, let $n_i(T) := \sum_{t=1}^T \indic_{i_t = i}$ be a set of time steps in which the system interacts with the $i^{th}$ user. We collect data by simply playing a randomly selected action at every step. We then estimate the vector $\hat{\mu}_i$ for all users:
\begin{align}
\label{eq:mui_estimate_highdim}
    \hat{\mu}_i =  \frac{SA}{n_i(T)} \sum_{t: i_t = i} r_{t} \cdot \bm{e}_{(s_{t}, a_{t})},
\end{align}
where $\bm{e}_{(s,a)}$ is a standard basis vector with $1$ at position $(s,a)$. For good users $i \in \mI^*$, we can see that the first- and second-order moments of a random quantity $\hat{\mu}_i$ satisfy: $\Exs[r_t \cdot \bm{e}_{(s_{t}, a_{t})}] = \frac{1}{SA} \mu, \ Cov(r_t \cdot \bm{e}_{(s_{t}, a_{t})}) \preceq \frac{1}{SA} I,$ and thus 
\begin{align}
    \Exs[\hat{\mu}_i] = \mu, \ Cov(\hat{\mu}_i) \preceq \frac{L}{T} SA \cdot I. \label{eq:mean_var_check_highdim}
\end{align}
Hence after $T$ steps, we can equivalently consider a set of $\{\hat{\mu}_i\}_{i=1}^L$ as $\alpha$-corrupted $L$ independent samples from the same mean-$\mu$ distributions with bounded second-order moments. Then, we can use an efficient robust mean estimator for a distribution with bounded second moments developed in \cite{cheng2019high}.


The details are as follows. Every time step $t \in [T]$ a user $i_t$ arrives with context $s_t$, and we play a random action $a_t \sim Unif(\mA)$. After $T$ steps, we construct $\hat{\mu}_i$ as in \eqref{eq:mui_estimate_highdim} for all users $i \in [L]$, and run the high-dimensional robust estimator whose existence is guaranteed in Theorem \ref{theorem:high_dim_robust_estimator} below, with input $\{\hat{\mu}_i\}_{i=1}^L$. The quality of the estimated mean-rewards $\hat{\mu}$ is guaranteed by the following:
\begin{proposition}
    \label{lemma:base_high_dim}
    Let the number of users be at least $L = \Omega(SA \log(SA) / \alpha)$ and the adversarial rate be $\alpha < 1/3$. After $T$ steps, with probability at least $9/10$, $\|\hat{\mu} - \mu\|_2 \le O\left(\sqrt{\alpha LSA / T} \right)$.
\end{proposition}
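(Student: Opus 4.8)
The plan is to reduce the claim to a single invocation of the high-dimensional robust mean estimator of Theorem~\ref{theorem:high_dim_robust_estimator}, used in dimension $d = SA$ on the vectors $\{\hat{\mu}_i\}_{i=1}^L$ from \eqref{eq:mui_estimate_highdim}. First I would fix the realization of the arrival sequence $(i_1,\dots,i_T)$ — equivalently, the visit counts $n_i(T)$ — and condition on it throughout; by Assumption~\ref{assumption:user_selection} (taking $c_L = 1$) every user has then been played exactly $n_i(T) = T/L$ times. For a good user $i\in\mI^*$, $\hat{\mu}_i$ is the average of $n_i(T)$ i.i.d.\ vectors $SA\cdot r_t\,\bm{e}_{(s_t,a_t)}$, with $s_t\sim\nu = Unif(\mS)$, $a_t\sim Unif(\mA)$, $r_t\sim\mR(s_t,a_t)$; moreover these averages are mutually independent across $i\in\mI^*$. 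The moment computation in \eqref{eq:mean_var_check_highdim} then gives $\Exs[\hat{\mu}_i]=\mu$ and $Cov(\hat{\mu}_i)\preceq \sigma^2 I$ with $\sigma^2 = O(SAL/T)$, using only $|\mu(s,a)|\le 1$ and the unit variance bound on $\mR(s,a)$.

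Given this, the rest is immediate. The multiset $\{\hat{\mu}_i\}_{i=1}^L$ is an $\alpha$-corrupted sample in $\mathbb{R}^{SA}$: the $|\mI^*|\ge(1-\alpha)L$ entries indexed by good users are i.i.d.\ with mean $\mu$ and covariance $\preceq\sigma^2 I$, while at most $\alpha L$ further entries are arbitrary — exactly the corruption model of Theorem~\ref{theorem:high_dim_robust_estimator}. Since the hypothesis $L = \Omega(SA\log(SA)/\alpha) = \Omega(d\log d/\alpha)$ meets that theorem's sample requirement and $\alpha < 1/3$, running the estimator on $\{\hat{\mu}_i\}_{i=1}^L$ returns $\hat{\mu}$ with $\|\hat{\mu}-\mu\|_2 \le O(\sigma\sqrt{\alpha}) = O(\sqrt{\alpha L S A/T})$ with probability at least $9/10$ (taking the estimator's failure constant below $1/10$). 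As this holds for every realization of the arrival sequence, it holds unconditionally.

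The step needing the most care is the conditional-independence claim, since in principle both the arrival process $\Gamma$ and the adversary may be adaptive. The adversary's adaptivity is harmless: Theorem~\ref{theorem:high_dim_robust_estimator} places no assumption whatsoever on the corrupted points, so their history-dependent values are irrelevant. For $\Gamma$, the clean way to see that conditioning on $(i_1,\dots,i_T)$ keeps the good users' draws i.i.d.\ is to model each good user as consuming entries, one per visit, from a pre-sampled i.i.d.\ stream of (context, per-action reward vector) tuples; provided $\Gamma$'s choices do not peek at those stream entries (so that each $n_i(T)$ is a stopping time for the natural filtration generated by past actions and arrivals), the first $n_i(T)$ entries of user $i$'s stream remain i.i.d., and distinct good users' streams are independent. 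Finally, restoring the constant $c_L$ of Assumption~\ref{assumption:user_selection} only inflates $\sigma^2$, hence the final bound, by an $O(1)$ factor.
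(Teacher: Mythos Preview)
Your proposal is correct and follows essentially the same route as the paper: verify the mean and covariance bounds \eqref{eq:mean_var_check_highdim} for the good users' vectors $\hat{\mu}_i$ (conditioned on the arrival counts, using Assumption~\ref{assumption:user_selection} to make $n_i(T)\asymp T/L$), then feed the $\alpha$-corrupted sample $\{\hat{\mu}_i\}_{i=1}^L$ into Theorem~\ref{theorem:high_dim_robust_estimator} with $d=SA$ and $\sigma^2 = O(SAL/T)$. The only difference is cosmetic: the paper dispatches the independence/adaptivity concern by appealing to the deterministic Condition~\ref{condition:deterministic_robust} (so that mere independence with common mean and bounded covariance suffices), whereas you spell out the stopping-time/pre-sampled-stream argument directly; both arrive at the same conclusion.
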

Given this estimate we output a policy $\hat{\pi}$ such that $\hat{\pi}(s) = arg\max_{a \in \mA} \hat{\mu}{(s,a)}$ for all $s \in \mS$. Then a simple algebra shows that (see Appendix \ref{appendix:proof_section_3}),
\begin{align}
    V_{\mB}^* - V_{\mB}^{\hat{\pi}} \le \frac{2}{\sqrt{S}} \|\mu - \hat{\mu}\|_2 \le O \left(\sqrt{\alpha LA / T} \right).\label{eq:high_dim_est_value_error} 
\end{align}
Thus after $T/L = O(\alpha A / \epsilon^2)$ per-user interactions, we obtain an $(\epsilon, 0.1)$-PAC guaranteed policy with the $O(1/S)$ parallelization gain. Note that the failure probability is chosen for the analysis purpose and can be arbitrarily improved to $1-\delta$ guarantee by collecting $\log (1/\delta)$ times more interactions and/or repeating the procedure $\log (1/\delta)$ times. 

The proof of Proposition \ref{lemma:base_high_dim} is essentially a corollary of the following key result on robust mean estimation.

\begin{theorem}[Theorem 1.3 in \cite{cheng2019high}]
    \label{theorem:high_dim_robust_estimator}
    Let $\mD$ be a distribution on $\mathbb{R}^d$ with unknown mean $\mu$ and bounded covariance $\Sigma$ such that $\Sigma \preceq \sigma^2 I$. Given an $\alpha$-corrupted set of $L = \Omega(d \log (d) / \alpha)$ samples drawn from $\mD$ with $\alpha < 1/3$, there is an algorithm that runs in time $\tilde{O}(Ld / \alpha^6)$ and outputs a hypothesis vector $\hat{\mu}$ such that with probability at least $9/10$, it holds $\|\hat{\mu} - \mu\|_2 \le O(\sigma \sqrt{\alpha})$.
\end{theorem}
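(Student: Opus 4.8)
The plan is to prove this as a robust mean-estimation statement in its own right, following the filtering paradigm combined with a near-linear-time SDP solver. The starting point is a deterministic \emph{stability} condition on the good samples: I would first show that when $L = \Omega(d\log d/\alpha)$ i.i.d.\ samples are drawn from $\mD$, then with probability at least $9/10$ every subset $S'$ obtained by deleting at most a $2\alpha$-fraction of them has empirical mean within $O(\sigma\sqrt\alpha)$ of $\mu$ and empirical covariance bounded in spectral norm by $O(\sigma^2)$. This follows from matrix-concentration and uniform (VC-type) arguments for bounded-covariance distributions, and the $d\log d$ sample size is exactly what makes the uniform-over-subsets guarantee hold.

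The conceptual core is the \emph{identifiability lemma}: if a weighted empirical distribution $w$ over the (corrupted) sample set places total weight at least $1-\alpha$ and its weighted covariance $\hat{\Sigma}(w)$ satisfies $\|\hat{\Sigma}(w)\|_2 \le \sigma^2 + \lambda$, then the weighted mean $\hat{\mu}(w)$ obeys $\|\hat{\mu}(w)-\mu\|_2 = O(\sigma\sqrt\alpha + \sqrt{\alpha\lambda})$. Intuitively, to displace the mean along a unit direction $v$ using only an $\alpha$-fraction of corrupting mass, the adversary must inflate the variance in direction $v$; capping the top eigenvalue of $\hat{\Sigma}(w)$ therefore caps the achievable displacement. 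This reduces the whole problem to the algorithmic task of finding weights $w$ with $\|\hat{\Sigma}(w)\|_2$ close to $\sigma^2$ while preserving $\|w\|_1 \ge 1-\alpha$.

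For the algorithm I would use iterative \emph{spectral filtering}: maintain nonnegative weights on the points; at each round identify the direction $v$ of largest weighted variance; if the top eigenvalue already certifies $\|\hat{\Sigma}(w)\|_2 = O(\sigma^2)$, output $\hat{\mu}(w)$ and invoke the identifiability lemma; otherwise downweight points with large squared projection $\langle x_i - \hat{\mu}(w), v\rangle^2$. The key accounting step uses the stability condition to upper-bound the good mass lying along any high-variance direction, which forces the filter to remove strictly more corrupted mass than good mass, so the procedure terminates having deleted only $O(\alpha)$ total mass and the invariant $\|w\|_1 \ge 1-\alpha$ is maintained throughout.

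The main obstacle—and the whole reason for the stated $\tO(Ld/\alpha^6)$ runtime rather than a slower polynomial bound—is that naively recomputing the top eigenvector of the $d\times d$ weighted covariance each round costs $\Omega(Ld + d^2)$ per iteration, and there can be many iterations. The resolution is twofold: (i) replace exact eigenvector computation by an approximate top-eigenvalue oracle via power iteration on implicit matrix-vector products, each costing $O(Ld)$, and (ii) batch the downweighting through a packing-SDP / matrix-multiplicative-weights framework that filters simultaneously along a small set of large-variance directions, so that only $\tO(\mathrm{poly}(1/\alpha))$ outer rounds suffice. Controlling the interaction between the oracle's approximation error and the filtering progress—so that \emph{approximate} top directions still delete more bad mass than good mass—is the delicate part, and the $\alpha^{-6}$ factor tracks the accumulated slack introduced by these approximations.
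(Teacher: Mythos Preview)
This theorem is not proved in the paper at all: it is quoted verbatim as Theorem~1.3 of \cite{cheng2019high} and used only as a black-box tool (see the sentence immediately preceding the theorem statement in Section~\ref{sec: robust high dimension}, and its later invocations in Algorithm~\ref{algo:estimate_nu} and Appendix~\ref{appendix:complete_procedure}). There is therefore no ``paper's own proof'' to compare your proposal against.

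For what it is worth, your outline---a deterministic stability condition on good samples, the identifiability lemma tying a spectral cap on $\hat{\Sigma}(w)$ to $\|\hat{\mu}(w)-\mu\|_2$, iterative spectral filtering that removes more bad mass than good, and a packing-SDP / matrix-multiplicative-weights speedup to reach $\tO(Ld/\alpha^6)$---is a faithful high-level summary of the argument in \cite{cheng2019high} and its predecessors \cite{diakonikolas2019robust, diakonikolas2017being}. But reproducing that proof is not something the present paper does or needs; it only needs the statement, together with the observation (Appendix~\ref{appendix:condition_for_estimators}) that the estimator succeeds under the deterministic Condition~\ref{condition:deterministic_robust} rather than requiring literal i.i.d.\ inputs.
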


\section{Lower Bound}
\label{section:lower_bound}
Through the two warm-up cases, we have seen that $\tilde{O}(\min(S,A) \cdot \alpha / \epsilon^2)$ number of per-user interactions are enough to obtain an $\epsilon$-optimal policy. A natural follow-up question is whether we can improve the sample complexity when there are large number of both contexts and actions $S, A \gg 1$. In this section, we show that this is the best sample complexity we can achieve when all context probabilities are uniform, {\it i.e.,} $\nu(s) = 1/S, \forall s \in \mS$:
\begin{theorem}
    \label{theorem:lower_bound}
    Suppose $\alpha < 1/3$ and $L \le poly(S, A, 1/\epsilon)$. For any constant $\beta > 0$, there exists a set of $\alpha$-corrupted multi-user systems such that no algorithm with $T/L = O \left( \min(S,A)^{1-\beta} \cdot \alpha^2 / \epsilon^2 \right)$ per-user interactions can output an $\epsilon$-optimal policy with probability more than $2/3$. 
\end{theorem}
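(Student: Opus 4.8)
The plan is to prove the lower bound via a reduction to a hypothesis-testing / information-theoretic argument in which the adversarial users are used to "hide" which of several nearby contextual bandit instances is the true one. The core intuition is this: if the adversaries are allowed to perfectly mimic the statistical behavior of a \emph{different} good-user instance $\mB'$ (rather than the true $\mB$), then from the algorithm's point of view the multi-user system is statistically indistinguishable from a system where $\mB'$ is the true instance and the roles of good/adversarial users are swapped — provided the $\alpha$-fraction is large enough to "cover" the discrepancy. Since the adversary controls an $\alpha$-fraction, it can blend instance $\mB'$ into the population so that the mixture looks like it came from $\mB$ with a corruption level $\lesssim \alpha$. So the algorithm cannot tell $\mB$ from $\mB'$ unless the per-user data is informative enough to distinguish the two \emph{single-user} distributions at corruption scale $\alpha$ — which, by the standard robust-estimation lower bound, requires the single-user signal-to-noise per coordinate to exceed $\Omega(\alpha)$ (not $\Omega(\sqrt{\alpha})$, but this is where $\min(S,A)$ enters).

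First I would construct the family of hard instances. Take $\min(S,A) = m$ (WLOG by symmetry between contexts and actions, focusing on, say, $A \le S$ so $m = A$, and using only $m$ contexts actively). On each of $m$ contexts, embed a two-arm sub-problem: one "good" arm with mean-reward gap parameter, and arrange the rewards to be Bernoulli-like with variance $\Theta(1)$. The instance is indexed by a hidden sign pattern $\xi \in \{\pm 1\}^m$ telling, in each context, which of the two candidate arms is optimal; the gap between the two is set to $\Delta = \Theta(\epsilon/\sqrt{m} \cdot \text{something})$ so that getting even a constant fraction of the $m$ coordinates wrong costs $\epsilon$ in policy value (this is the same $\ell_2$-to-value relation as in \eqref{eq:high_dim_est_value_error}). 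The adversary's strategy: for a random subset of users of measure $\alpha$, report data as if drawn from the instance with sign pattern $\xi'$ that disagrees with $\xi$ on a carefully chosen set of coordinates. Then I would argue that for each coordinate where the per-user mean-shift is below the detection threshold $\sim \alpha / \sqrt{N}$ (with $N = T/L$ per-user interactions, and the per-coordinate per-user variance being inflated by a factor $m$ because a user only hits a given context a $1/m$ fraction of the time — exactly as in \eqref{eq:mean_var_check_highdim}), the two populations of per-user empirical vectors are statistically close \emph{as corrupted samples}, i.e., there is a coupling/adversary realization making them identical in distribution.

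The key quantitative step is the following chain: a user contributes $n_i \approx N$ interactions spread over $m$ contexts, so $\approx N/m$ per context; the empirical mean of the good arm in a context therefore has standard deviation $\approx \sqrt{m/N}$. Robust mean estimation from $\alpha$-corrupted samples of a distribution with per-sample standard deviation $s$ can only pin the mean down to error $\Theta(s\sqrt{\alpha})$ when there is \emph{population-level} noise $s$ — but here the relevant obstruction is subtler: we need the \emph{per-user} quantity to be distinguishable, and the adversary can shift the population mean by up to $\Theta(\alpha \cdot s) = \Theta(\alpha\sqrt{m/N})$ while keeping the corrupted sample distribution within the feasible set. Actually the cleanest route is: reduce to the one-dimensional robust testing lower bound (two distributions with means differing by $\tau$ are indistinguishable under $\alpha$-corruption with any number of samples iff $\tau \lesssim \alpha \cdot (\text{per-sample std}))$; here "per-sample" = per-user, std $= \sqrt{m/N}$), giving indistinguishability whenever $\Delta \lesssim \alpha \sqrt{m/N}$, i.e. $N \lesssim \alpha^2 m / \Delta^2$. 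Plugging $\Delta^2 \asymp \epsilon^2/m$ (from requiring the accumulated policy error over $m$ coordinates to be $\epsilon$) yields $N \lesssim \alpha^2 m \cdot m/\epsilon^2$? — so I need to be careful and instead calibrate so that a \emph{constant fraction} of coordinates are individually flippable, each flip costing $\Delta/\sqrt{m}$-ish in value per the $\ell_2$ geometry, giving total forced error $\Omega(\epsilon)$ precisely when $N \le c\, m^{1-\beta} \alpha^2/\epsilon^2$. I would then wrap this in a Fano / Le Cam two-point (or many-point, to get the $m^{1-\beta}$ rather than $m$) argument over the sign patterns $\xi$, and use $L \le \text{poly}(S,A,1/\epsilon)$ to ensure the cross-user averaging cannot rescue the algorithm — this polynomial bound on $L$ is exactly what prevents the adversary's "noise" from being averaged away across users, and is the reason the statement is vacuous for super-polynomial $L$.

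The main obstacle I anticipate is making the "adversary can mimic the other instance" step fully rigorous while simultaneously getting the $m^{1-\beta}$ factor rather than a mere constant or a single factor of $m$. The single two-point argument only gives a lower bound that loses the dimension dependence inside $\Delta$; to surface $\min(S,A)$ one must exploit that the adversary's corruption budget $\alpha$ is \emph{shared} across all $m$ coordinates, so it cannot simultaneously hide large shifts in all of them — yet it can hide a $\sqrt{\alpha/m}$-ish shift in \emph{each}, or a full $\alpha$-shift in a $\beta$-fraction chosen adaptively. Formalizing "for any algorithm, the adversary picks the coordinates the algorithm is least sure about" requires either a minimax/averaging argument over a product prior on $\xi$ with the coordinates the adversary flips also randomized, or an explicit construction where the adversary's per-user reports are themselves i.i.d.\ from a fixed alternative instance so that the \emph{entire} corrupted dataset is exchangeable and the two global hypotheses induce identical distributions on the algorithm's view. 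I expect the exchangeability route to be the clean one: build two multi-user systems $\mathcal{M}_\xi$ and $\mathcal{M}_{\xi'}$ such that the joint law of all observations is \emph{exactly equal} under a suitable choice of who-is-adversarial, whenever the per-user KL between the single-user laws of $\mB_\xi$ and $\mB_{\xi'}$ is below $\sim \alpha^2$; then $\epsilon$-optimality for one instance forces $\ge \epsilon$ sub-optimality for the other, so no algorithm succeeds with probability $> 2/3$ on both, and the per-user KL bound unpacks (Bernoulli KL $\asymp \Delta^2 \cdot N/m$) to the claimed sample-complexity threshold.
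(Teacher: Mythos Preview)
Your contamination/exchangeability idea --- that an $\alpha$-fraction of adversaries can render two instances indistinguishable whenever the per-user TV between their single-user laws is $\lesssim \alpha$ (equivalently, per-user KL $\lesssim \alpha^2$) --- is correct and is essentially the content of the paper's Lemma~\ref{lemma:contamination_tv}. But this by itself yields only $N \gtrsim \alpha^2/\epsilon^2$, and the mechanism you propose for the extra $\min(S,A)$ factor does not go through.

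Concretely: in your two-arm-per-context construction on $m$ contexts, if $\xi$ and $\xi'$ differ on $k$ coordinates with per-coordinate reward gap $\Delta$, then the per-user KL is $\asymp k\,(N/m)\,\Delta^2$ (each of the $k$ differing contexts is visited $\approx N/m$ times, and \emph{every} pull there is informative), while the value separation between the two optimal policies is $\asymp k\Delta/m$. Imposing KL $\lesssim \alpha^2$ and separation $\gtrsim \epsilon$, then eliminating $\Delta$, gives $N \lesssim k\,\alpha^2/(m\epsilon^2)$; maximizing over $k\le m$ yields only $N \lesssim \alpha^2/\epsilon^2$ --- the $m$ cancels. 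Your closing formula ``KL $\asymp \Delta^2 \cdot N/m$'' corresponds to $k=1$, but then the separation is only $\Delta/m$, forcing $\Delta = m\epsilon$ and hence $N \lesssim \alpha^2/(m\epsilon^2)$, which goes the wrong way. A Fano wrapper does not rescue this: under contamination one still reduces to pairwise indistinguishability of the corrupted laws, and the same cancellation recurs. You correctly flagged this as the main obstacle, but neither of your proposed resolutions closes it.

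The paper extracts the $\min(S,A)$ factor via a genuinely different mechanism. It does \emph{not} compare two nearby ``real'' instances; it compares a \emph{null} system (all rewards $B(1/2)$, with $a^*(s)$ drawn uniformly from all $A$ arms and statistically independent of everything the algorithm observes) against the real system. In the null system the algorithm's action choices are independent of $a^*(\cdot)$, so a Bernstein-type concentration over the random choice of $a^*$ shows that with $N \le n\cdot\min(S,A)^{1-\beta}$ per-user interactions, no user is ever served $a^*(\cdot)$ more than $n$ times. Only those $\le n$ pulls of the hidden optimal arm carry any signal distinguishing null from real, and the contamination lemma then hides that length-$n$ reward sequence whenever $n \lesssim \alpha^2/(\epsilon^2\log L)$. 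In other words, the $\min(S,A)$ factor arises from a \emph{search} difficulty --- finding one good arm among $A$, across $S$ contexts --- and not from per-coordinate variance inflation. Your two-arm construction eliminates this search (every pull is informative), which is precisely why the factor vanishes from your bound.
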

By Theorem \ref{theorem:lower_bound}, we need at least $\Omega \left(\min(S,A) /\epsilon^2 \right)$ per-user interactions (up to inverse logarithmic factors) to obtain an $\epsilon$-optimal policy with a meaningfully large probability. Note that any algorithm that succeeds with a constant probability can be boosted to a high-probability algorithm by repeating the same procedure, and thus our lower bound also implies the non-existence of an algorithm with a constant success probability with $o(\min(S,A))$ per-user interactions. 
\begin{figure}[t]
    \centering
    \includegraphics[width=0.35\textwidth]{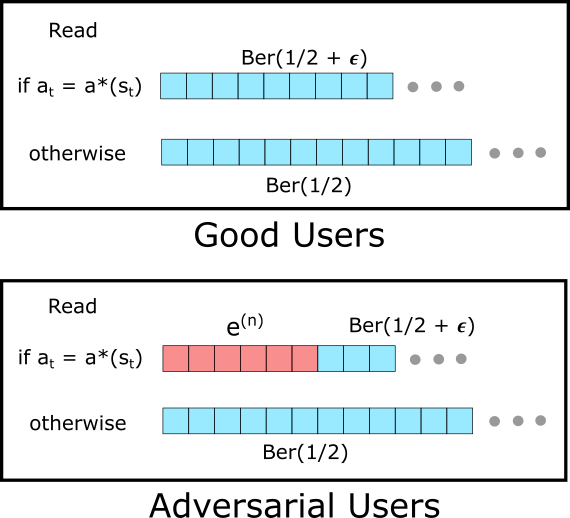}
    \caption{The lower bound construction. The tape describes the data gathered while taking action $a$. \emph{(Blue)}: indicates an uncorrupted reward, and \emph{(Red)}: indicates corrupted reward for the optimal action. For all non-optimal actions, all users generate rewards from $B(1/2)$.}
    \label{fig:lower_bound_fig}
    \vskip -0.3cm
\end{figure}
We prove this result by constructing two systems that are statistically indistinguishable: one is a completely random system (A) where all users only generate rewards from $B(1/2)$. Thus no algorithm can do better than a complete random guess of optimal actions in this system. In the other system (B), good users sample a reward from $B(1/2 + \epsilon)$ for the optimal action given a context, while sampling a reward from $B(1/2)$ for all other actions. We show that in system (B), adversaries can generate sequences of rewards for the optimal actions such that, without the identity of users (whether they are good or adversarial), the collection of all users' reward sequences are statistically indistinguishable in both systems. We then show that this implies the there cannot be an $(\epsilon/3, 1/3)$-PAC algorithm.

\subsection{Lower Bound for Two-Armed Bandit Case}
The proof of Theorem \ref{theorem:lower_bound} starts from a standard construction for the two-armed bandit problem that highlights the necessity of $\Omega(1/\epsilon^2)$ per-user interactions (up to inverse $\log(1/\epsilon)$ factors) for $S = 1, A = 2$. Let $\frac{1}{L} \le \alpha < \frac{1}{6}$, $L \le \epsilon^{-O(1)}$, and $\epsilon < \frac{0.01 \alpha}{\log L}$ be sufficiently small.

Consider a base system (A) where the reward of all users is sampled from $B(1/2)$ regardless of arms played. Further suppose that a virtual optimal arm $a^* \in \mA$ is selected independently and uniformly at random. Clearly, we cannot guess $a^*$ with probability better than $1/2$ since the observed rewards are statistically independent from $a^*$. 

Then, consider the following procedure which generates a corrupted multi-user system (B) as follows (Figure \ref{fig:lower_bound_fig}): 
\begin{enumerate}
    \item For every $i \in [L]$, with probability $1-\alpha$, add $i$ to $\mI^*$, {\it i.e.,} $i$ is a good user. Otherwise, $i$ is an adversarial user.
    \item If $i \in \mI^*$, then a user $i$ returns a reward $r \sim B(1/2 + \epsilon)$ whenever $a^*$ is played, and $r \sim B(1/2)$ otherwise.
    \item If $i \notin \mI^*$, then if the chosen action is $a^*$, it reads the reward sequence $e^{(n)}$ for the first $n$ times, where $e^{(n)}$ is sampled from the distribution $E^{(n)}$ (described in the proof Lemma~\ref{lemma:contamination_tv}). If the action is different than $a^*$, then it samples a reward $r \sim B(1/2)$.
\end{enumerate}
Note that the corruption by an adversarial user happens only for the first $n$ times and only for the optimal action. Also note that since the true identity of users are decided following $B(\alpha)$ independently, with high probability, system (B) generates a $2 \alpha$-corrupted multi-user system. 

Now we show that adversaries can play reward sequences such that systems (A) and (B) are not distinguishable. We first need the following lemma on confusing the sequence of product distributions:
\begin{lemma}
    \label{lemma:contamination_tv}
    Let $n \le 0.01 \alpha^2 / (\epsilon^2 \log L)$. Then there exists a distribution $E^{(n)}$ over $\{0,1\}^{n}$ such that
    \begin{align}
        d_{TV} \Big( B(1/2)^{\bigotimes n}, \ (1 - \alpha) \cdot B(1/2 + \epsilon &)^{\bigotimes n} + \alpha \cdot E^{(n)} \Big) \nonumber \\
        &\le 1/L^4. \label{eq:tv_craft}
    \end{align}
\end{lemma}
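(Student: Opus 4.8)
The plan is to construct $E^{(n)}$ explicitly as a \emph{mixture of tilted product distributions} and then bound the total-variation distance by a $\chi^2$-type second-moment computation. Recall that $B(1/2+\epsilon)^{\bigotimes n}$ has likelihood ratio against $B(1/2)^{\bigotimes n}$ equal to $\prod_{j=1}^n (1+2\epsilon)^{x_j}(1-2\epsilon)^{1-x_j}$, which, after centering $x_j \mapsto x_j - 1/2$, behaves like $\exp(\Theta(\epsilon \sum_j (2x_j-1)))$ up to lower-order terms. So the ``bad'' event that lets an observer distinguish is when the empirical sum $\sum_j (2x_j-1)$ is atypically large (of order $\epsilon n$ rather than $\sqrt n$). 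The idea is to let the adversary's distribution $E^{(n)}$ \emph{undo} exactly this: I would define $E^{(n)}$ so that the mixture $(1-\alpha) B(1/2+\epsilon)^{\bigotimes n} + \alpha E^{(n)}$ has its likelihood ratio against $B(1/2)^{\bigotimes n}$ clipped/truncated at a level $\tau = 1/\alpha$ (roughly), i.e. $E^{(n)}$ places mass proportional to $\big(1 - \frac{1-\alpha}{\alpha}\,(\text{LR}_{\text{good}} - 1)\big)_+$ or, more cleanly, choose $E^{(n)}$ to be $B(1/2)^{\bigotimes n}$ conditioned (tilted) so that it cancels the tail where the good-user likelihood ratio exceeds $\tau$. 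Concretely, write $p_0 = B(1/2)^{\bigotimes n}$, $p_1 = B(1/2+\epsilon)^{\bigotimes n}$, and set $E^{(n)} = \frac{1}{\alpha}\big(p_0 - (1-\alpha) p_1\big)_+$ normalized; the point of the lemma is that with the stated smallness of $n$, this is already a valid probability distribution (the negative part is tiny) and the resulting mixture is within $1/L^4$ of $p_0$ in TV.

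The key steps, in order: (1) Write the signed measure $g := p_0 - \big((1-\alpha) p_1 + \alpha E^{(n)}\big)$ and observe that choosing $E^{(n)}$ as above makes $\|g\|_{TV}$ equal (up to the normalization correction) to the mass of the region $R = \{x : (1-\alpha)\,\frac{dp_1}{dp_0}(x) > 1\}$ weighted appropriately — i.e. TV distance $\le 2\, p_0(R) + (\text{normalization slack})$, since outside $R$ the mixture matches $p_0$ exactly by construction. (2) Reduce to a tail bound: $x \in R$ iff $\frac{dp_1}{dp_0}(x) > \frac{1}{1-\alpha}$, i.e. $\sum_{j=1}^n \log\frac{(1/2+\epsilon)^{x_j}(1/2-\epsilon)^{1-x_j}}{(1/2)} > \log\frac{1}{1-\alpha} \ge \alpha$. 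Since each summand is $2\epsilon(2x_j - 1) + O(\epsilon^2)$, under $p_0$ the left side is a sum of $n$ bounded i.i.d. mean-zero terms with range $O(\epsilon)$, so by Hoeffding it exceeds $\Theta(\alpha)$ with probability at most $\exp\big(-\Theta(\alpha^2/(n\epsilon^2))\big)$. (3) Plug in $n \le 0.01\,\alpha^2/(\epsilon^2 \log L)$: the exponent is $\ge \Theta(\log L)$ with a large enough constant, giving $p_0(R) \le L^{-4}$ after tuning constants. (4) Check that $E^{(n)}$ is a genuine distribution: its total mass is $\frac{1}{\alpha}\big(1 - (1-\alpha)(1 - p_1(R^c)/\ldots)\big)$; a short computation shows the deficit from $1$ is $O(p_0(R)) = O(L^{-4})$, which we absorb into the final bound (or patch $E^{(n)}$ with an arbitrary distribution on the missing mass, which only moves TV by $O(L^{-4})$).

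The main obstacle I expect is step~(2)–(3): handling the $O(\epsilon^2)$ correction terms in $\log\frac{dp_1}{dp_0}$ carefully enough that the threshold event is genuinely $\{\sum_j(2x_j-1) \gtrsim \alpha/\epsilon\}$ and not contaminated by a deterministic $O(n\epsilon^2) = O(\alpha^2/(\epsilon \log L))$ drift — one must verify $n\epsilon^2 \ll \alpha$, which holds precisely because $n\epsilon^2 \le 0.01\alpha^2/\log L$ and $\alpha < 1/6$, so the drift is lower order compared to the $\Theta(\alpha)$ threshold. The secondary subtlety is making sure the constant in the Hoeffding exponent survives the factor-$4$ in $L^{-4}$; this is just bookkeeping, using the slack factor $0.01$ in the hypothesis on $n$. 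Everything else (the explicit clipping construction, the normalization patch) is routine once the tail estimate is in place.
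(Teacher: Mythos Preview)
Your construction is exactly the paper's: set $E^{(n)}$ to be the (normalized) positive part of $\frac{1}{\alpha}\bigl(p_0 - (1-\alpha)p_1\bigr)$ on the set where $p_0 \ge (1-\alpha)p_1$, and control the complement by a Hoeffding tail bound on the sufficient statistic $\sum_j(2x_j-1)$. The paper parametrizes the good set as $\{\Delta(e)\le 4\sqrt{n\log L}\}$ rather than the exact positivity region $R^c$, but since the likelihood ratio is monotone in $\Delta(e)$ these are nested and the distinction is cosmetic.

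One correction to your bookkeeping in steps~(2) and~(4): the claim ``TV $\le 2\,p_0(R) + (\text{normalization slack})$'' with slack $O(p_0(R))$ is not right as stated. A direct computation gives $Z = 1 + \frac{1}{\alpha}\bigl((1-\alpha)p_1(R) - p_0(R)\bigr)$ and, after plugging back, $d_{TV} = (1-\alpha)\,p_1(R) - p_0(R)$; so what you actually need is $p_1(R)$ small, not just $p_0(R)$. This is not a real obstacle --- the same Hoeffding argument applies under $p_1$, where the mean of $\log\frac{dp_1}{dp_0}$ is $O(n\epsilon^2)\ll\alpha$ (precisely the drift you already checked is lower order), yielding $p_1(R)\le L^{-\Theta(1)}$ as well. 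With that patch the argument goes through and matches the paper's.
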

Note that $B(1/2)^{\bigotimes n}$ corresponds to a distribution of length $n$ reward sequence of any user in the base system. 
The key fact here is that, 
if we can only see up to length-$n$ sequences of rewards from all $L$ users, then due to Lemma \ref{lemma:contamination_tv} and Le Cam's two-points method \cite{lecam1973convergence}, we cannot identify whether a system is (A) or (B) with probability better than $1/2 + 1/L^3 \approx 1/2$. 

Now we show by contradiction that no algorithm that interacts with each user less than $n$ times can recover $a^*$ in system (B) with probability more than $2/3$. To this end, suppose there exists an $(\epsilon/3, 1/3)$-PAC algorithm with at most $n$ per-user interactions. Then using this algorithm, we can conduct a statistical test that recovers the identity of the system (whether it is system (A) or (B)). But this contradicts the information theoretical limit. The test goes as follows. 
Let $\widehat{a}\in \mA$ be the optimal action (the one with the most probability assigned in the returned policy) that the algorithm outputs. If $\widehat{a} = a^*$, then output (B), otherwise output (A). The success probability of this testing is at least $1/2 \cdot 1/2 + 1/2 \cdot 2/3 \approx 7/12$. This contradicts Le Cam's lower bound. In other words, any $(\epsilon/3, 1/3)$-PAC algorithm must require at least $\tilde{\Omega} \left( \alpha^2 / \epsilon^2 \right)$ per-user interactions.

\subsection{$\min(S,A)$ Lower Bound}

We turn our attention to the general case with large $S$ and $A$. We still consider two systems (A) and (B), with the only difference that now an optimal action $a^*(s) \in \mA$ depends on the context $s \in \mS$. For all $s \in \mS$, let each $a^*(s)$ be decided uniformly at random, independently of any other events. The key idea for getting $\min(S,A)$ lower bound is to check that it is impossible to play better than a random guess on $a^*(\cdot)$ in system (A). Furthermore, if we cannot play $a^*(\cdot)$ more than $n$ times with any user in system (A), then by the similar contradiction argument to Le Cam's fundamental limit, we cannot play the optimal action $a^*(\cdot)$ more than $n$ times for any user either in system (B). Here, following Lemma \ref{lemma:contamination_tv}, we let $n = o( \alpha^2 / \epsilon^2)$.  

Note that $a^*(s)$ can be any action with probability $1/A$ for every context $s$, therefore a random guess on $a^*(\cdot)$ is correct with probability $1/A$. In the setting where $S \gg A$, within each user we see any context essentially only once, and only with $1/A$ probability the optimal action can be chosen. Hence we need at least $N = O(An)$ interactions to ensure at least $n$ correct actions chosen for any user. On the other hand, if $A \gg S$, then from a random guess, one can show that it is unlikely to guess the optimal actions for more than constant number of contexts within $L = A^{O(1)}$ trials. In other words, for each user we would play the optimal actions for only constant number of contexts, and such contexts are only observed $O(N/S)$ times after $N$ per-user interactions. Thus, we need at least $N = O(Sn)$ interactions to play optimal actions at least $n$ times.

Therefore, we can conclude that we need at least $\Omega(\min(S,A) \cdot \alpha^2 / \epsilon^2)$ per-user interactions in order to play $a^*(\cdot)$ more than $n$ times. Furthermore, we show that this is also the case in system (B) due to Le Cam's two point method, which can be translated to impossibility of $(\epsilon/3, 1/3)$-PAC algorithm. Complete and formal proof of Theorem \ref{theorem:lower_bound} is provided in Appendix \ref{appendix:proof_lower_bound}.

We comment that our lower bound holds only when $L$ is polynomial in $S$ and $A$. This is required because, if we allow arbitrarily large number of users, {\it e.g.,} $L \gg A^S$, then we can divide users into $A^S$ groups and in each group we can evaluate every possible stationary policy with $O(1 / \epsilon^2)$ per-user interactions. Theorem \ref{theorem:lower_bound} suggests that such desirable sample-complexity is only possible when more than polynomial number of users are available.

\section{Main Algorithm}
\label{section:main_algorithm}

\begin{algorithm}[t]
    \caption{Estimate $\nu$}
    \label{algo:estimate_nu}
    \begin{algorithmic}[1]
        \FOR{$t = 1, 2, ..., T_0$}
            \STATE{An user $i_t$ arrives with a context $s_t$.}
            \STATE{Play any action and move to next time step.}
        \ENDFOR
        \STATE{Let $\sigma_s^2 := \max \left( n_s(T_0), O(\log S) \right) / T_0$ for all $s \in \mS$.}
        \STATE{Compute $b_i = \frac{1}{n_i(T_0)} \sum_{t: i_t = i} \sigma_{s_t}^{-1} \bm{e}_{s_t}$ for all $i \in [L]$.}
        \STATE{Get $\hat{b}$ with input $\{\hat{b_i}\}_{i=1}^L$ to high-dimensional robust estimator described in Theorem \ref{theorem:high_dim_robust_estimator}.}
        \STATE{Return $\hat{\nu} := diag(\{\sigma_s\}_{s \in \mS}) \cdot \hat{b}$.} 
    \end{algorithmic}
\end{algorithm}

In Section \ref{section:baseline}, we showed that the best strategy is simply using either univariate robust estimator if $A \ge S$, or high-dimensional robust estimator $S \ge A$ when the context probability is uniform, {\it i.e.,} $\nu(s) = 1/S$ for all $s \in \mS$. We extend this approach to the case where $\nu(s)$ is a general distribution over contexts. Non-uniform context probability often arises in recommendation systems where a certain set of contexts (e.g., keywords) are more preferred by users.

Our main idea to exploit non-uniform context probability is quite simple: suppose we can order contexts in probability descending order $s_{[1]},\ldots, s_{[S]}$ such that $\nu(s_{[1]}) \ge \ldots \ge \nu(s_{[S]})$. Let $\mS_+ := \{s_{[j]} \mid j \in [\min(A,S)] \}$ be a set of top-$A$ frequent contexts. Note that if $A \ge S$, then $\mS_+ = \mS$. Now for the estimation of rewards under contexts $s \in \mS_+$, we use a uni-variate robust estimator (Section~\ref{sec: univar estimator}). For the other contexts $s \in \mS_- := \mS / \mS_+$, we estimate $\mu$ for the remaining part using a high-dimensional robust estimator (Section~\ref{sec: robust high dimension}). By combining the two robust estimators in a corrupted multi-user system, we can achieve the improved sample complexity in polynomial time. 

One challenge is that we are not given the probability distribution over contexts $\nu$ observed by good users. Thus, in the pre-processing step, we need to estimate $\nu$ to decide which contexts we consider as top-$A$ contexts, {\it i.e.,} find $\hat{\nu}$ such that $\|\hat{\nu} - \nu\|_1 \le \epsilon$. This can be done by the robust high-dimensional estimator with re-scaling of each coordinate after $T_0 = O(\alpha L/\epsilon^2)$ time steps. Let $n_s(T_0) := \sum_{t=1}^{T_0} \indic_{s_t = s}$ be the total number of times that a context $s$ is observed, and $\sigma_s^2 := \max( n_s(T_0), O(\log S) ) / T_0$. Note that $\sigma_s^{-1}$ scaling serves as a equalizer of context-wise variances. We summarize the procedure to estimate $\nu$ in Algorithm \ref{algo:estimate_nu}. 

Note that the samples $\{\hat{b}_i\}_{i=1}^L$ are not independent to each other. Nevertheless, robust estimators in \citet{cheng2019high, lugosi2021robust} can still be used when samples are not exactly identical or independent as long as some deterministic conditions hold (see Appendix \ref{appendix:condition_for_estimators}). We show that with this nice property, we can still recover good enough estimates of $\hat{\nu}$ in Algorithm \ref{algo:estimate_nu}. 

We conclude this section with a theoretical guarantee on ($\epsilon$, 1/3)-optimality of the policy returned by the main Algorithm \ref{algo:nonuniform_robust} (Robust MCB) combining all components:
\begin{theorem}
    \label{theorem:upper_bound}
    Let $\alpha < 1/3$ and $L = \Omega(SA \log(SA) / \alpha)$. If we run Algorithm \ref{algo:nonuniform_robust} with $T_0 = O(L \alpha / \epsilon^2)$ and $T > L \cdot \alpha /\epsilon^2$, then with probability at least 2/3, $\hat{\pi}$ satisfies:
    \begin{align}
        V^*_{\mB} - V_{\mB}^{\hat{\pi}} \le O \left ( K(\mB) \cdot \sqrt{\alpha L / T} \right),
    \end{align}
    where $K(\mB)$ is an instance-dependent quantity given by
    \begin{align}
        K(\mB) = \sum_{j=1}^{A} \sqrt{\nu(s_{[j]})} + \sqrt{\sum_{j=A+1}^S \nu(s_{[j]}) \cdot A} .
    \end{align}
\end{theorem}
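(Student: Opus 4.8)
The plan is to decompose the value suboptimality of $\hat\pi$ into contributions from the two context groups $\mS_+$ (top-$A$ frequent contexts, handled by the univariate robust estimator) and $\mS_-$ (remaining contexts, handled by the high-dimensional robust estimator), plus the error incurred by using $\hat\nu$ instead of $\nu$ when deciding the partition. The basic inequality I would start from is the standard one: for any $s$, the per-context regret of the greedy policy w.r.t.\ $\hat\mu$ is bounded by $2\max_a |\hat\mu(s,a) - \mu(s,a)|$ in the $\mS_+$ case, and by a Cauchy--Schwarz argument $\tfrac{2}{\sqrt{|\mS_-|}}\|\hat\mu_{\mS_-} - \mu_{\mS_-}\|_2$ restricted to $\mS_-$ in the high-dimensional case (mirroring \eqref{eq:high_dim_est_value_error}). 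Weighting by $\nu(s)$ and summing gives
\begin{align*}
    V^*_{\mB} - V^{\hat\pi}_{\mB} \le 2\sum_{s \in \mS_+} \nu(s)\,\epsilon_s^{\mathrm{uni}} + 2\sum_{s \in \mS_-}\nu(s)\,\epsilon_s^{\mathrm{hd}} + (\text{mis-partition error}),
\end{align*}
so everything reduces to controlling the per-context estimation errors and the budget allocation.

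Next I would do the accounting of how many per-user interactions each context receives. For $s \in \mS_+$, out of $T$ total steps the context $s$ is seen roughly $\nu(s) T$ times; running the procedure of Section~\ref{sec: univar estimator} (split users into $A$ groups, trimmed mean) on these samples, Proposition~\ref{lemma:base_univariate} / Theorem~\ref{theorem:univariate_robust_mean} gives $\epsilon_s^{\mathrm{uni}} = O(\sqrt{\alpha L/(\nu(s) T)})$ since the effective per-user sample count for that context-arm pair scales like $\nu(s)T/(LA)$ and the variance is bounded by $1$. Summing $\nu(s)\epsilon_s^{\mathrm{uni}}$ over $s \in \mS_+$ yields $O(\sqrt{\alpha L/T})\sum_{j=1}^A \sqrt{\nu(s_{[j]})}$, the first term of $K(\mB)$. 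For $\mS_-$, I would re-run the estimator (\ref{eq:mui_estimate_highdim})--(\ref{eq:mean_var_check_highdim}) restricted to the coordinate block indexed by $\mS_- \times \mA$, dimension $d = |\mS_-|\cdot A \le SA$; Theorem~\ref{theorem:high_dim_robust_estimator} (Proposition~\ref{lemma:base_high_dim}) then gives $\|\hat\mu_{\mS_-} - \mu_{\mS_-}\|_2 = O(\sqrt{\alpha L |\mS_-| A/T})$ — but here the subtlety is that the rescaling by $SA$ in \eqref{eq:mui_estimate_highdim} should be replaced by something reflecting the actual (non-uniform) context frequencies on $\mS_-$, so the covariance bound in \eqref{eq:mean_var_check_highdim} must be redone coordinate-wise with $\nu(s)$ in place of $1/S$; this is exactly why the $\sigma_s$-equalizer trick from Algorithm~\ref{algo:estimate_nu} is needed. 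After plugging into the Cauchy--Schwarz bound with $|\mS_-| \le S$ one gets $O(\sqrt{\alpha L/T})\cdot\sqrt{\sum_{j=A+1}^S \nu(s_{[j]})\cdot A}$, the second term of $K(\mB)$.

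The remaining piece is the pre-processing error: I need $\hat\nu$ accurate enough in $\ell_1$ that the sets $\mS_+,\mS_-$ chosen from $\hat\nu$ induce only $O(\sqrt{\alpha L/T})$ additional regret even when they misclassify contexts near the $A$-th quantile. I would invoke Theorem~\ref{theorem:high_dim_robust_estimator} on the rescaled vectors $\hat b_i = \tfrac{1}{n_i(T_0)}\sum_{t:i_t=i}\sigma_{s_t}^{-1}\bm e_{s_t}$, check that after de-scaling $\hat\nu = \mathrm{diag}(\sigma)\hat b$ we get $\|\hat\nu - \nu\|_1 \le \epsilon$ once $T_0 = O(\alpha L/\epsilon^2)$ — the $\sigma_s$ normalization makes each coordinate's variance $O(1/T_0)$ (with the $\log S$ floor preventing blow-up for rare contexts), so the dimension-$S$ robust estimator yields $\|\hat b - b\|_2 = O(\sqrt{\alpha L S/T_0})$, hence $\|\hat\nu-\nu\|_1 \le \sum_s \sigma_s|\hat b_s - b_s| \le (\sum_s \sigma_s^2)^{1/2}\|\hat b - b\|_2$, and $\sum_s \sigma_s^2 = O(1 + S\log S/T_0)$ which is $O(1)$ for the stated $T_0$. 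A swapping/monotonicity argument then shows that any context put on the wrong side of the cut has $\nu$-mass comparable to its neighbors' and contributes negligibly. Finally, I would note that $\{\hat b_i\}$ (and the reward vectors) are not i.i.d., so I would cite the deterministic-condition versions of the robust estimators (Appendix~\ref{appendix:condition_for_estimators}) and verify those conditions hold with high probability here, then union-bound the $O(1)$ failure events and absorb constants.

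I expect the main obstacle to be the $\mS_-$ analysis together with the mis-partition control: one has to carefully re-derive the covariance bound \eqref{eq:mean_var_check_highdim} for the non-uniform restriction (making sure the $\sqrt{|\mS_-|}$ in the denominator of the Cauchy--Schwarz regret bound and the $\sqrt{|\mS_-|A}$ inside the $\ell_2$-error combine to exactly $\sqrt{A}$ times the $\nu$-mass of $\mS_-$, not something larger), and simultaneously argue that estimating $\nu$ only up to $\ell_1$-accuracy $\epsilon$ does not corrupt this split — the worry being a pile-up of many contexts with $\nu$ just above/below the cutoff. Handling the dependence across samples is conceptually routine given the cited appendix but needs to be stated cleanly. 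The univariate-side bookkeeping and the final summation into $K(\mB)$ should be comparatively mechanical.
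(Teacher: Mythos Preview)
Your plan is essentially the paper's proof: split into $\mS_+$ (univariate) and $\mS_-$ (high-dimensional with $\sigma_s$-equalization), bound each via the robust estimators, and control the pre-processing error by proving $\|\hat\nu-\nu\|_1\le\epsilon$ exactly as you sketch. Two small corrections that will save you effort.

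First, the mis-partition control is far easier than you anticipate. There is no need for a swapping argument or any worry about ``pile-up near the cutoff.'' For $\mS_+$ one simply observes that $\sum_{s\in\mS_+}\sqrt{\nu(s)}\le\sum_{j=1}^{\min(S,A)}\sqrt{\nu(s_{[j]})}$ because any $A$-element subset sum is dominated by the top-$A$ sum. For $\mS_-$, since $\mS_-$ is the $\hat\nu$-bottom set, $\sum_{s\in\mS_-}\hat\nu(s)\le\sum_{j=A+1}^S\hat\nu(s_{[j]})$, and then $\|\hat\nu-\nu\|_1\le\epsilon$ converts this to $\sum_{s\in\mS_-}\nu(s)\le\sum_{j=A+1}^S\nu(s_{[j]})+2\epsilon$. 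That is the entire mis-partition argument.

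Second, for the $\mS_-$ analysis do \emph{not} try to bound $\|\hat\mu_{\mS_-}-\mu_{\mS_-}\|_2$ and then apply the uniform-$\nu$ Cauchy--Schwarz $\tfrac{2}{\sqrt{|\mS_-|}}\|\cdot\|_2$; that step is only valid for uniform context measure and will not cancel correctly. Instead, as you already suspect, work entirely in the $\sigma_s$-rescaled coordinates: robustly estimate $\hat b$ with $\Exs[\hat b_{i,\mS_-}]=\mathrm{diag}(\sigma_s^{-1}\nu(s))\mu_{\mS_-}$ and covariance $\preceq O(LA/T)\,I$ (the equalizer makes the covariance isotropic with no $|\mS_-|$ factor), obtain $\|\hat b-\Exs\hat b\|_2=O(\sqrt{\alpha LA/T})$, and then Cauchy--Schwarz in the form $\sum_{s\in\mS_-}\nu(s)|\mu-\hat\mu|\le\bigl(\sum_{s\in\mS_-}\sigma_s^2\bigr)^{1/2}\|\hat b-\Exs\hat b\|_2$ together with $\sum_{s\in\mS_-}\sigma_s^2=O\bigl(\sum_{s\in\mS_-}\nu(s)\bigr)$ gives exactly the second term of $K(\mB)$.
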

For the uniform context probability, Theorem \ref{theorem:upper_bound} provides a PAC guarantee with $O(\min(S,A) \cdot \alpha / \epsilon^2)$ per-user interactions. The benefit of non-uniform context probabilities is more explicit in the following example: suppose $\nu(s_{[j]}) \propto 1/j^{1+\gamma}$ for $\gamma > 0$, {\it i.e.,} context probability decays in polynomial rates. Then the sample complexity per-user guaranteed by Algorithm \ref{algo:nonuniform_robust} is $O(\min(S,A)^{1-\gamma} \cdot \alpha / \epsilon^2)$ for $\gamma < 1$. When $\gamma \ge 1$, we have a desired per-user sample-complexity $\tilde{O} (\alpha / \epsilon^2)$.

\begin{algorithm}[t]
    \caption{Robust Multi-task Contextual Bandits}
    \label{algo:nonuniform_robust}
    \begin{algorithmic}[1]
        \STATE{Pre-Process 1: Get top-$A$ frequent contexts $\mS_+$ from $\hat{\nu}$ returned by Algorithm \ref{algo:estimate_nu}.}
        \STATE{Pre-Process 2: For each $s \in \mS_+$ and $i \in [L]$, assign $a_{i,s} \sim Unif(\mA)$. Add $i$ to $\mI_{s, a_{i,s}}$.}
        \FOR{$t = 1, 2, ..., T$}
            \STATE{An user $i_t$ arrives with a context $s_t$.}
            \IF{$s_t \in \mS_+$}
                \STATE{Play $a_t = a_{i_t, s_t}$ and observe reward $r_t$.}
            \ELSE
                \STATE{Play $a_t \sim Unif(\mA)$ and observe reward $r_t$.}
            \ENDIF
        \ENDFOR
        \STATE{Retrieve $\hat{\mu}$ from the procedure in Appendix \ref{appendix:complete_procedure}.}
        \STATE{Return $\hat{\pi}$ s.t. $\hat{\pi}(s) = arg\max_{a \in \mA} \hat{\mu} (s,a)$ $\forall s \in \mS$.}
    \end{algorithmic}
\end{algorithm}

\begin{figure*}[t!]
    \centering
    \begin{tabular}{cccc}
        \includegraphics[width=38mm]{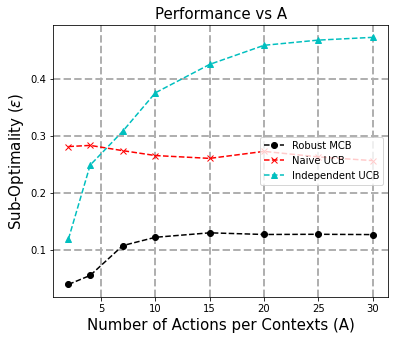} & 
        \includegraphics[width=38mm]{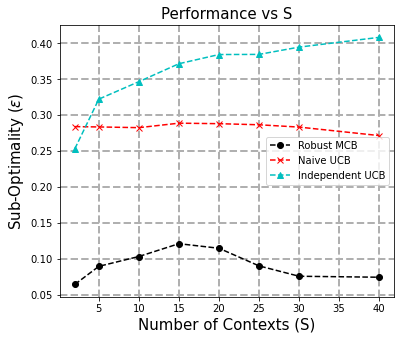} & 
        \includegraphics[width=38mm]{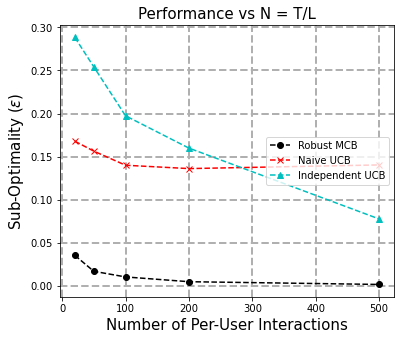} & 
        \includegraphics[width=38mm]{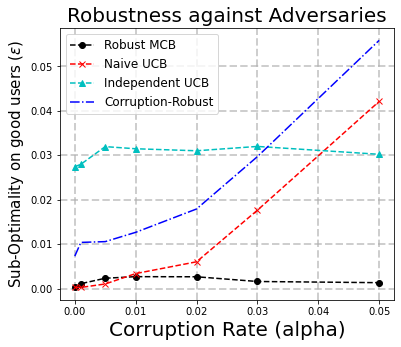} \\
        (a) & (b) & (c) & (d)
    \end{tabular}
    \caption{(a-c) Sub-optimality of the returned policy $\epsilon$ depending on each parameters $S, A, T/L$. Y-axis illustrates the sub-optimality of the policy, {\it i.e.,} $\epsilon \propto \sqrt{\min(S,A) \cdot T / L}$.
    (d) Robustness of algorithms in $\alpha$-corrupted multi-user systems. 
    }
    \label{fig:sample_complexity}
    \vskip -0.2cm
\end{figure*}

\subsection{Discussion}
We remark here a few discussion points that could be of independent and future interest.

\paragraph{Regret Minimization:} In the multi-armed bandit case (or $S \le A$), we can implement a simple regret-minimization algorithm by combining doubling trick and successive elimination techniques. The challenge arises in the other case $S \ge A$. Our main idea for handling this case is to use a high-dimensional robust estimator. However, the guarantee for mean-reward estimators are only given in overall $l_2$-distances, which guarantee performance of returned policies {\it only in expectation}. However, low-regret algorithm should be able to eliminate sub-optimal actions, which needs coordinate-wise accurate estimates of mean-rewards which is not available for high-dimensional robust estimation. 

One way to minimize the {\it overall} regret is to divide user sets into two different groups: one group for exploration, and the other one for exploitation. If the number of users are given $\Omega(S^{3/2} A)$, then we can interact with $O(1/\sqrt{S})$-fraction of users to find an improved policy, while for others we play a policy obtained in the previous round of epoch. It would be an interesting question to find an algorithm that performs uniformly good for all users in terms of regret. 

\paragraph{Tightness on $\alpha$:} Theorem \ref{theorem:lower_bound} suggests $\Omega(\alpha^2)$ lower bound while our upper bound is guaranteed with $O(\alpha)$ samples. This $\alpha$-gap in the lower and upper bounds results from the fact that our lower bound is built upon Bernoulli reward assumptions, and thus holds for sub-Gaussian type reward distributions, while our upper bound relies only on the bounded second-order moment condition for reward distributions. Tightening a factor of $\alpha$ for sub-Gaussian reward distributions might require developing robust estimators for sub-Gaussian distributions with unknown bounded covariances.

\paragraph{Similar Preferences:} For personalized recommendations, good users may have similar but not exactly the same preference over all contexts and items \cite{ghosh2021collaborative}. For instance, suppose that for all $i \neq j$ and $i, j \in \mI^*$ with underlying tasks $\mB_i, \mB_j$ respectively, and we have $\forall \pi \in \Pi, |V_{\mB_i}^{\pi} - V_{\mB_j}^{\pi}| \le \epsilon_0$. We mention here that the robust estimators we employ here are robust to small perturbations in samples. Therefore, we can first run Algorithm \ref{algo:nonuniform_robust} to find a common policy $\bar{\pi}$ such that $|V_{\mB_i}^* - V_{\mB_i}^{\bar{\pi}}| \le O(\epsilon_0 + \epsilon)$, after which we can learn for each user separately, {\it e.g.,} using the algorithm in \cite{ghosh2021problem}.

\section{Experiments} 
\label{section:experiments}

We evaluate the proposed algorithm on synthetic data. We set the sub-optimality gap in all contexts approximately~$0.3$. Our first experiment illustrates the performance of Robust MCB (Algorithm \ref{algo:nonuniform_robust}) in terms of the sub-optimality $\epsilon$ of a returned policy for various numbers of users, contexts, actions, per-user interactions and corruption-rates. Additional experiments on the rate of adversaries and similar preferences are presented in Appendix \ref{appendix:additional_experiments}.

\paragraph{Sample Complexity.} 
We first check the sample complexity dependence on $S$ and $A$ as stated in Theorem \ref{theorem:upper_bound}. We compare our robust multitask contextual bandit (Robust MCB) algorithm to two primitive algorithms: (i) an UCB algorithm that does not share information across users (Independent UCB) and (ii) the UCB algorithm that ignores user identifiers as if there is no adversary (Naive UCB). The performance of policy is evaluated after a certain number of time steps on a good user. We generate random instances of multitask contextual bandits on various number of contexts, actions, and the number of users. The fraction of adversaries is 20 percent, {\it i.e.,} $\alpha = 0.2$. The measured sub-optimal gaps $\epsilon$ are averaged over 50 independent experiments. 

The experimental results are given in Figure \ref{fig:sample_complexity} (a)-(c). We fix the base parameters as $S = 10$, $A = 10$, and $T/L = 30$ with $L = O(SA \log(SA))$ users, and measure the accuracy of returned policies for varying parameters. (c) shows how the number of per-user interactions contributes to the performance of returned policies. As shown in the figure, our robust method outperforms two naive approaches. More importantly, we can observe that the increase in $S$ or $A$ does not degrade the performance which confirms $\min(S,A)$ dependency on the sample complexity.

\paragraph{Corruption Robust Algorithms.} Although not considered in the framework of multitask learning, it is worth considering corruption-robust algorithms to defend adversaries' plays \cite{lykouris2018stochastic, liu2021cooperative}. We implement the robust algorithm in \cite{liu2021cooperative} without communication constraint, and compare the performance as increasing the corruption rate $\alpha$ fixing $S = 2$, $A = 5$, $L = 500$ and $T/L = 500$. Since the amount of corruption is linear in $T$, a small fraction of adversarial users can easily attack the corruption-robust algorithm, while we defend against such attacks by exploiting the side information $i_t$, the unique identifiers of data source (Figure \ref{fig:sample_complexity} (d)).

\section{Future Work}
We believe that our work opens up the prospect of investigating more general problems of multitask learning with adversarial users. In particular, we believe coordinated attacks on Markov decision processes (MDPs) would be an interesting future problem to explore. Extending our results in the context of function approximation would also be an interesting future direction. In more technical directions, it would be also interesting to study tighter instance-dependent sample-complexity and regret minimization algorithms.

On a deeper level, the type of questions we have asked  should be put in the context of responsible AI. 
Our stylized model provided principled answers to questions such as how many individuals must collude to manipulate a decision? And how to effectively address the possibility of collusion between agents? We show that by hardening a decision algorithm, it is possible to overcome collusion of a much larger portion of the population.

\bibliographystyle{icml2022}
\bibliography{main}

\newpage

\begin{appendices}
\onecolumn
\section{Additional Experiments}
\label{appendix:additional_experiments}
\begin{figure*}[!t]
    \centering
    \begin{tabular}{ccc}
        \includegraphics[width=79mm]{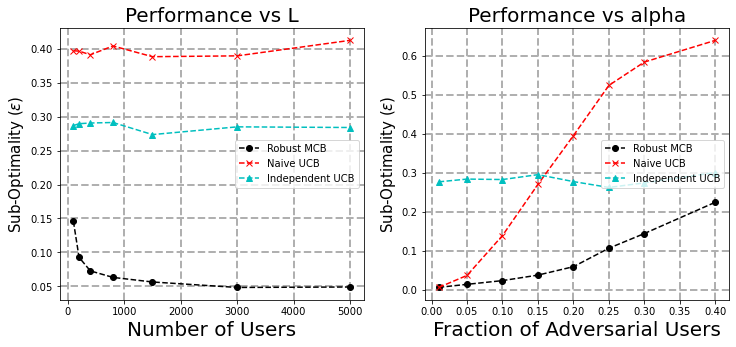} &
        \includegraphics[width=41mm]{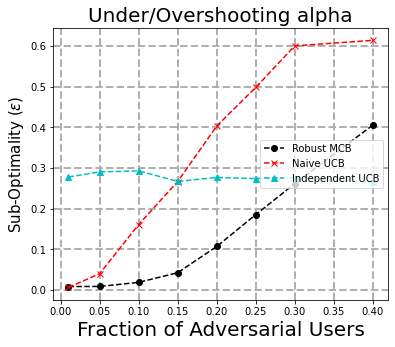} &
        \includegraphics[width=41mm]{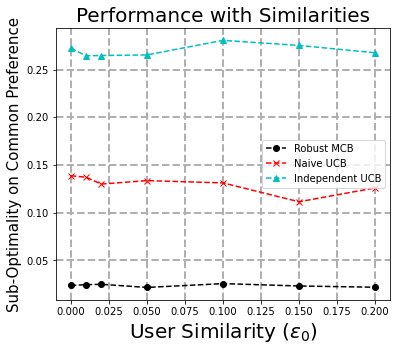} \\
        (a) & (b) & (c)
    \end{tabular}
    \caption{(a) Increase in the number of users only helps until $L = \tilde{O}(SA/\alpha)$. Afterwards, the corruption rate rules the required number of per-user interactions. (b) Robust MCB is tolerant to small differences in user preferences.}
    \label{fig:advanced_experiment}
\end{figure*}

\paragraph{Effective Corruption-Rate.} We recall that we require the number of users to be $\tilde{\Omega}(SA / \alpha)$ for robust estimation rewards. When $L$ is small compared to inverse of $\alpha$, we may consider $\alpha' = \max \left( \frac{SA}{L}, \alpha \right)$ to be an effective fraction of adversarial users. This effect can be seen in Figure \ref{fig:advanced_experiment} (a): we fix $S = 20, A = 10, T/L = 50$, and see how the policy is improved as $L$ increases. The effective corruption rate decreases when $L$ is below a threshold, and we see the improvement in the returned policy. When $L$ becomes larger the threshold, increase in the number of users no longer improves the quality of the policy. In such case, we can only obtain a better policy by collecting more data from individual user.

\paragraph{Under/Over-Shooting Corruption Rate.} We also perform a ablation study when we misspecify the hyper-parameter for the corruption rate $\alpha$. We set our algorithm to be run with $\alpha = 0.1$, when the real fraction of adversaries varying from 0 to 0.3. When the actual corruption rate is less than the hyper-parameter $\alpha = 0.1$, the algorithm is robust and still outperforms other methods. When the actual corruption rate is larger than $0.1$, our algorithm starts to lose the robustness against adversaries. We can conclude that over-shooting is always safer than under-shooting the rate of corruptions.

\paragraph{Similar Preferences.} We make users' preferences slightly different from each other by adding small random perturbations of magnitude $O(\epsilon_0)$ to mean-rewards of all contexts and actions for each user. As we gradually increase the amount of perturbations, Robust MCB can tolerate small differences and find the best policy for the common preferences (Figure \ref{fig:advanced_experiment} (c)).

\section{Deferred Proofs for Section \ref{section:baseline}}

\subsection{Conditions for Robust Estimators}
\label{appendix:condition_for_estimators}
As described in recent work for high-dimensional robust estimations \cite{diakonikolas2019robust, cheng2019high, lugosi2021robust}, Theorem \ref{theorem:univariate_robust_mean} and \ref{theorem:high_dim_robust_estimator} only require a few deterministic conditions to succeed on the concentration of the first and second-order moments. For these robust estimators for $d$-dimensional samples, we need the following deterministic conditions:
\begin{condition}
    \label{condition:deterministic_robust}
    Suppose $N$ good sample points $X_1, X_2, ..., X_N \in \mathbb{R}^d$ (not necessarily identically or independently distributed). Let $\mathcal{W}_{\epsilon} = \left\{ (w_1, w_2, ..., w_N) | \sum_{i=1}^N w_i = 1, \ 0 \le w_i \le \frac{1}{(1-\epsilon)N } \right\}$ for a fixed $\epsilon < 1/3$. Then for all $w \in \mathcal{W}_{3\epsilon}$, there exists $\mu^*$ and some absolute constants $\delta_1 = O(\sqrt{\epsilon}), \delta_2 = O(1), \delta_3 = O(\sqrt{d/\epsilon})$ such that
    \begin{align}
        \left\| \sum_{i=1}^N w_i (X_i - \mu^*) \right\|_2 \le \delta_1, \\
        \left\| \sum_{i=1}^N w_i (X_i - \mu^*) (X_i - \mu^*)^\top \right\|_2 \le \delta_2, \\
        \forall i \in [N], \quad \|X_i - \mu^*\|_2 \le \delta_3.
    \end{align}
\end{condition}
Note that i.i.d. assumption on samples is only a sufficient condition for Condition \ref{condition:deterministic_robust} to hold. When $d = 1$, the above condition also subsumes the precondition for a uni-variate robust estimator in \cite{lugosi2021robust}. To ensure the correctness of robust estimators we use henceforth, we only need to check these deterministic conditions. When each $X_i$ is an independent random variable with mean $\mu^*$ and covaraince $Cov(X_i) \preceq I$, following the argument in Appendix A in \cite{diakonikolas2019robust} and Lemma A.18 in \cite{diakonikolas2017being}, we can consider $\{X_i\}_{i=1}^N$ as $(0.1 \epsilon)$-corrupted samples with $\{X_i | i \in [N], \|X_i - \mu^*\|_2 \le \delta_3 \}$ as good sample points that satisfy Condition \ref{condition:deterministic_robust}, with probability at least $1 - \delta$ as long as $N = \Omega(d \log (d / \delta) / \epsilon)$ (by Bernstein's inequality for the mean, and the matrix Chernoff bound \cite{tropp2015introduction} for covariance). Henceforth, we only need to check whether the distribution of each sample point $X_i$ has the common mean and bounded second-order moments. 

\subsection{Proof of Propositions \ref{lemma:base_univariate} and \ref{lemma:base_high_dim}}
\label{appendix:proof_section_3}
For each $a \in \mA$, we only need to check Condition \ref{condition:deterministic_robust} for mean-reward estimates $\hat{r}_i$ from good users $i \in \mI^* \cap \mI_a$. Note that by construction, $|\mI_a| = \Theta(\log(A) / \alpha)$ and $|\mI^* \cap \mI_a| \ge (1 - 1.1 \alpha) |\mI_a|$ with probability more than 0.99 for a sufficiently large $L = \Omega(A \log (A) / \alpha)$. Conditioned on the event that each $n_i \ge O(T/L)$, which holds with probability $1$ under Assumption \ref{assumption:user_selection}, we have
\begin{align*}
    \Exs[\hat{r}_i] = \mu(a), \quad Var(\hat{r}_i) \le O(L/T). 
\end{align*}
We can use either median-of-means for $\alpha \ge 1/16$, or Theorem \ref{theorem:univariate_robust_mean} in \cite{lugosi2021robust} for $\alpha < 1/16$. Then robust estimator outputs $\hat{\mu}(a)$ such that $|\hat{\mu}(a) - \mu(a)| \le O \left( \sqrt{\alpha L / T} \right)$ with probability at least $1 - 0.1 / A$. Taking union bound over all $a \in \mA$, we get the lemma.

The proof for Lemma \ref{lemma:base_high_dim} follows similarly from equation \eqref{eq:mean_var_check_highdim} and Theorem \ref{theorem:high_dim_robust_estimator}.

\paragraph{Proof of equation \eqref{eq:high_dim_est_value_error}: } 
A simple algebra shows that
\begin{align*}
    V_{\mB}^* - V_{\mB}^{\hat{\pi}} &= \frac{1}{S} \sum_{s \in \mS} \mu(s, \pi^*(s)) - \mu(s, \hat{\pi} (s)) \\
    &\le \frac{1}{S} \sum_{s \in \mS} \mu(s, \pi^*(s)) - \hat{\mu}(s, \pi^*(s)) + \hat{\mu}(s, \hat{\pi} (s)) - \mu(s, \hat{\pi} (s)) \\
    &\le \frac{1}{\sqrt{S}} \sqrt{\sum_{s \in \mS} \left(\mu(s, \pi^*(s)) - \hat{\mu}(s, \pi^*(s)) \right)^2} + \frac{1}{\sqrt{S}} \sqrt{\sum_{s \in \mS} \left(\mu(s, \hat{\pi}(s)) - \hat{\mu}(s, \hat{\pi}(s)) \right)^2} \\
    &\le \frac{2}{\sqrt{S}} \|\mu - \hat{\mu}\|_2 \le O \left(\sqrt{\alpha LA / T} \right). 
\end{align*}

\section{Deferred Proofs in Section \ref{section:lower_bound}}
In this appendix, we provide a full proof of our key negative results.
\subsection{Proof of Lemma \ref{lemma:contamination_tv}}
We first note that a simple algebra shows
\begin{align*}
    d_{TV} \left( B(1/2)^{\bigotimes n}, B(1/2 + \epsilon)^{\bigotimes n} \right) \le \sqrt{\frac{n}{2} d_{KL}(B(1/2) || B(1/2 + \epsilon)) } \le 0.1 \alpha,
\end{align*}
where $d_{KL}$ is a Kullback-Leibuler divergence, and we used Pinsker's inequality with $n \le 0.01 \alpha^2 / \epsilon^2$. For any sequence $e^{(n)} \in \{0,1\}^n$, let $\PP_{B(1/2)^{\bigotimes n}} (e^{(n)}) = 2^{-n}$ when sampled from $B(1/2)^{\bigotimes n}$. Let $e^{(n)}_j$ be a value at the $j^{th}$ position in sequence $e^{(n)}$, and let $\Delta(e^{(n)})$ be defined as $\sum_{j=1}^n \indic_{e_j^{(n)} = 1} - \indic_{e_j^{(n)} = 0}$, {\it i.e.,} the differences in the number of $1$'s and $0$'s in $e^{(n)}$. For any $e^{(n)} \in E_{good}^{(n)} := \{ e^{(n)} | \Delta(e^{(n)}) \le 4\sqrt{n \log L} \}$, we can show that
\begin{align*}
    \PP_{B(1/2)^{\bigotimes n}} (e^{(n)}) \ge (1-\alpha) \PP_{B(1/2 + \epsilon)^{\bigotimes n}} (e^{(n)}).
\end{align*}
Let $t = \Delta(e^{(n)}) < 4\sqrt{n \log L}$. The inequality is obvious for $t \le 0$, and for $t > 0$,
\begin{align*}
    \frac{\PP_{B(1/2 + \epsilon)^{\bigotimes n}} (e^{(n)})}{\PP_{B(1/2)^{\bigotimes n}} (e^{(n)})} &= (1-4\epsilon^2)^{(n - t) / 2} \cdot (1 + 2\epsilon)^{t} \\
    &\le (1 - n \cdot \epsilon^2) \cdot (1 + 4 t \epsilon) \le (1 - c \alpha^2) \cdot (1 + 4\sqrt{c} \alpha) \le \frac{1}{1-\alpha}.
\end{align*}
Let the distribution $E^{(n)}$ over $\{0,1\}^n$ such that for $e^{(n)} \in E_{good}^{(n)}$, 
\begin{align*}
    \PP_{E^{(n)}} (e^{(n)}) = \frac{1}{Z} \frac{1}{\alpha} \left( \PP_{B(1/2)^{\bigotimes n}} (e^{(n)}) - (1-\alpha) \PP_{B(1/2 + \epsilon)^{\bigotimes n}} (e^{(n)}) \right),
\end{align*}
where $Z$ is a normalizer to make $E^{(n)}$ a valid distribution, and $\PP_{E^{(n)}} (e^{(n)}) = 0$ for $e^{(n)} \notin E_{good}^{(n)}$. Now to check the equation \eqref{eq:tv_craft}, we see that $Z$ is less than:
\begin{align*}
    Z &= \frac{1}{\alpha} \left(\PP_{B(1/2)^{\bigotimes n}} \left( e^{(n)} \in E_{good}^{(n)} \right) - (1-\alpha) \PP_{B(1/2 + \epsilon)^{\bigotimes n}} \left( e^{(n)} \in E_{good}^{(n)}\right) \right) \\
    &= \frac{1}{\alpha} \left( \PP_{B(1/2)^{\bigotimes n}} \left(e^{(n)} \notin E_{good}^{(n)} \right) - \PP_{B(1/2 + \epsilon)^{\bigotimes n}} \left(e^{(n)} \notin E_{good}^{(n)} \right) + \alpha \cdot \PP_{B(1/2 + \epsilon)^{\bigotimes n}} \left(e^{(n)} \in E_{good}^{(n)} \right) \right) \\
    &\le \frac{1}{\alpha} \PP_{B(1/2)^{\bigotimes n}} \left(e^{(n)} \notin E_{good}^{(n)} \right) + \PP_{B(1/2)^{\bigotimes n}} \left(e^{(n)} \in E_{good}^{(n)} \right) \\
    &\le 1 + \frac{1-\alpha}{\alpha} \cdot \PP_{B(1/2)^{\bigotimes n}} \left(e^{(n)} \notin E_{good}^{(n)} \right). 
\end{align*}
The first inequality comes from the fact that $\PP_{B(1/2 + \epsilon)^{\bigotimes n}} \left(e^{(n)} \in E_{good}^{(n)} \right) \le \PP_{B(1/2)^{\bigotimes n}} \left(e^{(n)} \in E_{good}^{(n)} \right)$. Standard Hoffeding's inequality shows that $\PP_{B(1/2)^{\bigotimes n}} \left(e^{(n)} \notin E_{good}^{(n)} \right) \le \exp(-(4\sqrt{n\log L})^2/ (2n)) = 1/L^8$. For the lower bound on $Z$, we can simply check that
\begin{align*}
    Z &= \frac{1}{\alpha} \left(\PP_{B(1/2)^{\bigotimes n}} \left( e^{(n)} \in E_{good}^{(n)} \right) - (1-\alpha) \PP_{B(1/2 + \epsilon)^{\bigotimes n}} \left( e^{(n)} \in E_{good}^{(n)}\right) \right) \\
    &\ge \frac{1}{\alpha} \left(\PP_{B(1/2)^{\bigotimes n}} \left( e^{(n)} \in E_{good}^{(n)} \right) - (1-\alpha) \PP_{B(1/2)^{\bigotimes n}} \left( e^{(n)} \in E_{good}^{(n)}\right) \right) \\
    &= 1 - \PP_{B(1/2)^{\bigotimes n}} \left(e^{(n)} \notin E_{good}^{(n)} \right). 
\end{align*}
Similarly, we can also show that $Z \ge 1 - \frac{1}{L^8}$. A simple algebra on total-variance distance shows us that $\eqref{eq:tv_craft} \le \frac{1}{L^4}$.

\subsection{Full Proof of Theorem \ref{theorem:lower_bound}}
\label{appendix:proof_lower_bound}
For the construction of hard instances, we assume that $\alpha < 1/6$, $\epsilon \le c \cdot \alpha / \log^{3/2} L$ for a sufficiently small constant $c \le 0.01$ and $L \le \min(S,A)^{O(1)}$. We also assume that a context $s$ is always sampled independently from $Unif(\mS)$ regardless of incoming users and actions played. 

We first show that in system A, it is not possible to play right actions more than $n \le 0.01 \alpha^2/ (\epsilon^2 \log L)$ times for any user $i$ if we interact with user $i$ less than $(\min(S,A) \cdot \alpha^2 / \epsilon^2)$ times. Let $\Eps$ be the event that there exists at least one user $i \in [L]$ such that we played right actions more than $n$ times with $i$ after $N$ per-user interactions, where $N$ is specified in Lemma \ref{lemma:auxiliary_game_A}. In system A, we can show that the chance of event $\Eps$ is very small for any algorithm:
\begin{lemma}
    \label{lemma:auxiliary_game_A}
    In system A, let $\beta > 0$ be any small constant. Let the number of contexts $S$ be sufficiently large so that $\frac{\log S}{\log (\log L)} > 2 / \beta$. Suppose that we interact with any user no more than $N = n \cdot \min(S, A)^{1-\beta}$ times. Then, no algorithm can trigger the event $\Eps$ with probability more than $1/L^2$. 
\end{lemma}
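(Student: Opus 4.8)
# Proof Proposal for Lemma \ref{lemma:auxiliary_game_A}

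\textbf{Setup and overall strategy.} The plan is to show that in system A—where all rewards are drawn from $B(1/2)$ independent of everything, and each optimal action $a^*(s)$ is chosen uniformly at random and independently of all rewards—the algorithm's choices of actions are statistically independent of $\{a^*(s)\}_{s \in \mS}$. Consequently, ``playing the right action'' at any particular interaction is a coin flip with success probability exactly $1/A$ (for a fixed context $s$, conditioned on the algorithm's behavior, the event $a_t = a^*(s_t)$ has probability $1/A$ since $a^*(s_t)$ is uniform and independent). The core of the argument is then a counting/concentration estimate: bound the probability that for \emph{some} user $i$, the number of ``correct'' plays among that user's $\le N$ interactions exceeds $n$, and take a union bound over the $L$ users. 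The parameter balance $N = n \cdot \min(S,A)^{1-\beta}$ and the hypothesis $\frac{\log S}{\log\log L} > 2/\beta$ must be exactly what makes the tail bound beat $1/L^2$.

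\textbf{Key steps.} First I would formalize the independence claim: since $a^*(\cdot)$ is drawn after fixing the reward tapes (or equivalently, independently of them) and the algorithm sees only rewards and contexts, the sequence of (user, context, action, reward) tuples is independent of $a^*(\cdot)$. Hence, conditioning on the entire transcript, each $a^*(s)$ remains uniform on $\mA$, and for any fixed user $i$ the number of correct plays $M_i := \#\{t : i_t = i,\ a_t = a^*(s_t)\}$ is stochastically dominated by a sum of indicators. Second, I would split into the two regimes. When $A \le S$: for each of user $i$'s $\le N$ interactions, the chance the played action matches $a^*(s_t)$ is $1/A$, so $M_i$ is dominated by $\mathrm{Binomial}(N, 1/A)$ with mean $N/A = n \cdot A^{-\beta} \ll n$; a Chernoff bound gives $\PP(M_i \ge n) \le \exp(-\Omega(n \log(A^\beta))) = A^{-\Omega(\beta n)}$, or more crudely $\binom{N}{n}(1/A)^n \le (eN/(nA))^n = (eA^{-\beta})^n$. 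When $S \le A$: here the subtlety is that correct plays are spread across many \emph{distinct} contexts—within $N \ll S$ interactions a user rarely revisits a context, so to get $n$ correct plays the algorithm essentially must guess $a^*$ correctly for $\approx n$ different contexts, each a $1/A \le 1/S$ event; the same binomial-tail computation with success probability $1/S$ and $N = nS^{1-\beta}$ interactions yields $\PP(M_i \ge n) \le (eS^{-\beta})^n$. Third, union bound over $i \in [L]$: $\PP(\Eps) \le L \cdot (e\min(S,A)^{-\beta})^n$. Finally, I would verify this is $\le 1/L^2$: taking logs, we need $3\log L \le n(\beta \log\min(S,A) - 1)$; since $n \ge 1$ and (using $L \le \min(S,A)^{O(1)}$) $\log L = O(\log\min(S,A))$, while the hypothesis $\frac{\log S}{\log\log L} > 2/\beta$ — combined with $n$ being at least a mild power of $\log$ quantities — forces the right side to dominate. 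I would state the precise chain of inequalities here, being careful that $n$ may be as small as $\Theta(1)$ when $\epsilon \asymp \alpha/\log^{3/2}L$, which is exactly why the hypothesis is phrased in terms of $\log S / \log\log L$ rather than just $\log S$.

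\textbf{Main obstacle.} The delicate point is the $S \gg A$ regime, where I must rule out the algorithm exploiting context \emph{repetitions} to amplify correct plays: a single lucky guess of $a^*(s)$ for a frequently-recurring context $s$ could yield many correct plays from few guesses. The fix is that contexts arrive from $Unif(\mS)$ independently of the algorithm, so within $N = nS^{1-\beta} \ll S$ interactions the expected number of repeated contexts is $O(N^2/S) = O(n^2 S^{1-2\beta})$, which is $o(n)$ for the relevant range; thus repetitions contribute negligibly and the ``distinct guesses'' heuristic is rigorous up to a lower-order correction. Making this quantitative—bounding $M_i$ by (number of distinct contexts correctly guessed) $\times$ (max multiplicity) and controlling both—will be the part that needs the most care, and is presumably where the exponent $2/\beta$ (rather than $1/\beta$) in the hypothesis comes from. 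The other steps (independence, Chernoff, union bound) are routine once this is in place.
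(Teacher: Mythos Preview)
Your independence reduction (step one) is right and matches the paper. The gap is in the concentration step: the claim that $M_i$ is stochastically dominated by $\mathrm{Binomial}(N,1/A)$ is false, and your birthday-paradox patch does not repair it.

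The indicators $\indic_{a_t=a^*(s_t)}$ are \emph{not} independent across $t$: two interactions with the same context $s$ share the single random draw $a^*(s)$, so if the algorithm plays the same action both times, those indicators are perfectly correlated. In the extreme, if one context appears $N_{\max}$ times with the same action, a single lucky guess contributes $N_{\max}$ to $M_i$ with probability $1/A$. Your attempted fix is to argue repetitions are rare because $N^2/S = n^2\min(S,A)^{1-2\beta}$ is $o(n)$. But the lemma allows $\beta$ arbitrarily small, and for $\beta<1/2$ the factor $\min(S,A)^{1-2\beta}$ diverges, so $N^2/S$ is certainly not $o(n)$; in fact each context is visited on the order of $N/S=n\min(S,A)^{-\beta}\cdot\min(S,A)/S$ times, which can be large. (You also swapped the regime labels in the obstacle paragraph: the formula $N=nS^{1-\beta}$ belongs to $S\le A$, not $S\gg A$.)

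The paper avoids the independence issue by grouping by context rather than by time step: set $X_s=\sum_{t:s_{i,t}=s}\indic_{a_{i,t}=a^*(s)}$, so that $M_i=\sum_s X_s$ with the $X_s$ genuinely independent (each depends on a distinct $a^*(s)$), $\Exs[X_s]=N_s/A$, $\mathrm{Var}(X_s)\le N_s^2/A$, $X_s\le N_s$. Bernstein then gives
\[
\PP(M_i\ge n)\le\exp\!\left(-\frac{(n-N/A)^2/2}{A^{-1}\sum_s N_s^2+\tfrac13 N_{\max}(n-N/A)}\right)\le\exp\!\left(-\frac{n}{N_{\max}}\right),
\]
using $\sum_s N_s^2\le N\cdot N_{\max}$ and $N/A\ll n$. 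The crucial input is a bound on $N_{\max}:=\max_s N_s$: since contexts are drawn uniformly from $\mS$, concentration gives $N_{\max}\le 2N/S+O(\log L)$ with high probability. Splitting on whether $N/S$ exceeds $\log L$, the tail is at most $\exp(-c\,S^\beta)$ in one case and $\exp(-c\,n/\log L)$ in the other; the hypotheses $\log S/\log\log L>2/\beta$ and $n\gg\log^2 L$ make both beat $1/L^3$, and a union bound over users finishes. What you are missing is precisely this control of $N_{\max}$ (equivalently, of $\sum_s N_s^2$) rather than of the total number of collisions.
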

\begin{proof}
    Note that in system A, observed reward sequences are independent of the choice of actions at every step. Consequently, any choice of actions by the agent is independent of correct actions $a^*(s)$. Suppose that an algorithm interacted with all users $N$ times, and let $\{s_{i,t}, a_{i,t}\}_{t=1}^N$ be a length $N$ sequence of contexts and actions when interacting with user $i$. Whatever action choices $\{a_{i,t}\}_{t=1}^N$ made by the agent is statistically independent of $a^*(s)$. Hence we can equivalently think that $a^*(s)$ is a random guess of chosen actions for a context $s$. 
    
    Now we can change the game to guessing more than $n$ chosen actions $a_{i,t}$ by an user $i$ using the completely random guess $a^*(s)$ for all $s \in \mS$. Since $a^*(s)$ is completely independent of interaction histories, without loss of generality, we can assume that $s_{i,t}, a_{i,t}$ are fixed after algorithm interacts $N$ times with every user. If there is at least one user $i$ such that $\sum_{t=1}^N \indic_{a_{i,t} = a^*(s_{i,t})} \ge n$, then we win the game, {\it i.e.,} the event $\Eps$ is triggered. 
    
    Let us fix $i$ for now and define a few variables:
    \begin{align*}
        X_{s} &:= \sum_{t: s_{i,t} = s} \indic_{a_{i,t} = a^*(s)}, \\
        N_{s} &:= \sum_{t: s_{i,t} = s} 1, \
        N_{s,a} := \sum_{t: s_{i,t} = s} \indic_{a_{i,t} = a},
    \end{align*} 
    for all $s \in \mS, a \in \mA$. Note that $X_s$ is a random variable decided by $a^*(s)$, with $\Exs[X_{s}] = N_s / A$ and $Var(X_s) \le N_s^2 / A$, and $X_s \le N_s$ almost surely. Furthermore, $X_{s}$ and $X_{s'}$ are independent if $s \neq s'$. Thus, now we only need to check the probability of event $\sum_{s\in \mS} X_s \ge n$. Let $t = n - N/A$. Since $\{X_s\}_{s \in \mS}$ are independent random variables, we can apply Bernstein's inequality to obtain:
    \begin{align}
        \PP \left( \sum_{s \in \mS} X_s \ge n \right) &\le \exp \left(- \frac{\frac{1}{2}t^2 }{\sum_{s\in \mS} Var(X_s) + \frac{1}{3} (\max_{s} N_s) t } \right) \nonumber \\
        &\le \exp \left( - \frac{ \frac{1}{2}t^2 }{ \frac{1}{A} \sum_{s\in \mS} N_s^2 + \frac{1}{3} (\max_{s} N_s) t } \right). \label{eq:game_win_event}
    \end{align}
    One thing we note here is that since we assumed uniformly sampled context every step, from Bernstein's inequality, we can bound the maximum of $N_s$ as the following:
    \begin{lemma}
        $N_{max} := \max_{s} N_s \le 2 N/S + O(\log(L/\delta))$ with probability at least $1 - \delta / L^4$.
    \end{lemma}
    This is an application of basic Bernstein's inequality for sums of Bernoulli random variable with parameter $1/S$. Then, using this and $\sum_{s \in \mS} N_s = N$, a simple algebra can show that 
    \begin{align*}
        \sum_{s \in \mS} N_s^2 &\le N \cdot N_{max}.
    \end{align*}
    Under such event, we can bound \eqref{eq:game_win_event} further such that 
    \begin{align*}
        \PP \left( \sum_{s \in \mS} X_s \ge n \right) &\le \exp \left( - \frac{ \frac{1}{2}t^2 }{ N_{\max} (\frac{N}{A}  + \frac{1}{3} t) } \right) \le \exp \left( - \frac{ n }{ N_{\max} } \right),
    \end{align*}
    where we used $n > 10 N/A$ for sufficiently large $A$. Now if $N/S > \log(L/\delta)$, then it is less than $\exp(- nS / N) \le \exp(- c_1 \cdot S^{\beta})$ for some constant $c_1 > 0$. Otherwise, it is less than $\exp(- c_1 \cdot n / \log L)$. In either case, this probability is small enough so that for all users $i \in [L]$, we can take a union bound and show that with no user we have played right actions more than $n$ times if:
    \begin{align*}
        c \cdot S^\beta \gg \log L, \text{ and } c \cdot n \gg \log^2 L,
    \end{align*}
    which holds with our setup $\frac{\log S}{\log (\log L)} > 2 / \beta$ and $n \ge 4\log^2 L$. This concludes the proof for Lemma \ref{lemma:auxiliary_game_A}.
\end{proof}
\vskip 0.2cm

Now suppose that there exists an algorithm that can trigger $\Eps$ in system B with less than $N$ times of per-user interactions with probability more than $2/3$. However, note that system A and B cannot be distinguished before $\Eps$ is triggered due to Lemma \ref{lemma:contamination_tv} and Le Cam \cite{lecam1973convergence} as argued before. However, if it is possible to trigger $\Eps$, {\it i.e.,} to play correct actions at least $n$ times for any user, only using $N$ interactions with probability more than $2/3$, then it is possible to distinguish A and B with probability better than $1/2 \cdot (1 - 1/L^2) + 1/2 \cdot 2/3 \approx 5/6$. Note that until $\Eps$ is triggered, we can only observe reward sequences of length at most $n$ for correct actions. Therefore, this means that we have a hypothesis testing mechanism by detecting the event $\Eps$ only with length $n$ reward sequences for correct actions. This contradicts the fundamental limit of two hypothesis testing.

Equivalently, if there exists an $(\epsilon/3, 1/3)$-PAC algorithm using at most $N/2$ per-user interactions, then we can run this algorithm to obtain an $(\epsilon/3, 1/3)$-PAC policy, and run this policy for the rest of $N/2$ per-user interactions to trigger $\Eps$ in system B. This again contradicts the fundamental limit, and thus we conclude Theorem \ref{theorem:lower_bound}.

\section{Deferred Details in Section \ref{section:main_algorithm}}

\subsection{Recovery Procedure for $\hat{\mu}$ in Algorithm \ref{algo:nonuniform_robust}}
\label{appendix:complete_procedure}
We describe the procedure we deferred in Algorithm \ref{algo:nonuniform_robust} after $T$ time steps. For every context, let $n_i = | \{t| i_t=i \}|$ be the number of interactions with $i$. For top-$A$ frequent contexts $s \in \mS_+$, let $\hat{b}_{i,s} = \frac{1}{n_{i}} \sum_{t: i_t = i} r_t \sigma_{s}^{-1} \indic_{s_t = s}$ be the empirical mean of reward times the context probability from the user $i$ and the context $s$. Then, for every $a \in \mA$, call the univariate robust estimator in Theorem \ref{theorem:univariate_robust_mean} with input $\{\hat{b}_{i,s}\}_{i \in \mI_{s,a}}$, and receive an estimate $\hat{b} (s,a)$. Set $\hat{\mu} (s,a) = \sigma_s^{-1} \cdot \hat{b} (s,a)$ for every $s \in \mS_+$ and $a \in \mA$. 

For those contexts not in $\mS_+$, let $\mS_- = \mS / \mS_+$ and $\hat{b}_i := \frac{A}{n_i} \sum_{t: i_t = i} r_t \sigma_{s_t}^{-1} \indic_{s_t \in \mS_-} \cdot \bm{e}_{(s_t, a_t)}$ where we use $\sigma_s$ found in Algorithm \ref{algo:estimate_nu}. We call the robust estimator in Theorem \ref{theorem:high_dim_robust_estimator} with input $\{\hat{b}_i\}_{i=1}^L$, and get a returned $\hat{b}$. Note that we do not use this estimator for frequent contexts, and thus $\hat{b}(s,a) = 0$ for $s \in \mS_+$. Set $\hat{\mu}(s,a) = \sigma_s^{-1} \cdot \hat{b} (s,a)$ for every $s \in \mS_-$ and $a \in \mA$.

\subsection{Proof of Theorem \ref{theorem:upper_bound}}
We first show that the context probability estimated in Algorithm \ref{algo:estimate_nu} is approximately correct with the following guarantee:
\begin{lemma}
    \label{lemma:estimate_nu}
    Let $\alpha < 1/3$ and $L = \Omega(S \log (S) / \alpha)$. Then Algorithm \ref{algo:estimate_nu} with $T_0 = O(L\alpha / \epsilon^2)$ returns an estimator $\hat{\nu}$ that satisfies
    \begin{align*}
        \| \hat{\nu} - \nu \|_1 \le \epsilon, 
    \end{align*}
    with probability at least $9/10$.
\end{lemma}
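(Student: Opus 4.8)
The plan is to reduce the estimation of $\nu$ to the high-dimensional robust mean estimation result (Theorem~\ref{theorem:high_dim_robust_estimator}) applied to the re-scaled indicator vectors $b_i = \frac{1}{n_i(T_0)}\sum_{t: i_t = i} \sigma_{s_t}^{-1} \bm{e}_{s_t}$. First I would identify the target mean: for a good user $i \in \mI^*$, each observed context is an independent draw from $\nu$, so $\Exs[\sigma_{s_t}^{-1}\bm{e}_{s_t}] = \sum_s \sigma_s^{-1}\nu(s)\bm{e}_s = diag(\{\sigma_s^{-1}\})\nu$, hence $\Exs[b_i] = diag(\{\sigma_s^{-1}\})\nu =: b^*$. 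Then $\hat{\nu} = diag(\{\sigma_s\})\hat{b}$ is the natural estimator and $\|\hat{\nu}-\nu\|_1 \le \sqrt{S}\,\|diag(\{\sigma_s\})(\hat{b}-b^*)\|_2 \le \sqrt{S}\,(\max_s \sigma_s)\|\hat{b}-b^*\|_2$; since $n_s(T_0) \le T_0$ always, $\sigma_s^2 \le 1$, so this is at most $\sqrt{S}\,\|\hat{b}-b^*\|_2$.

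Next I would bound the covariance of $b_i$ for good users to get the $\sigma^2 I$ bound needed by Theorem~\ref{theorem:high_dim_robust_estimator}. The per-step term $\sigma_{s_t}^{-1}\bm{e}_{s_t}$ has covariance $\preceq \Exs[\sigma_{s_t}^{-2}\bm{e}_{s_t}\bm{e}_{s_t}^\top] = diag(\{\sigma_s^{-2}\nu(s)\})$. The key point of the $\sigma_s$ re-scaling is that $\sigma_s^{-2}\nu(s) = T_0 \nu(s)/\max(n_s(T_0), O(\log S))$; on the event that the empirical counts concentrate, $n_s(T_0) \approx T_0\nu(s)$ when $T_0\nu(s) \gtrsim \log S$, so $\sigma_s^{-2}\nu(s) = O(1)$; and when $T_0\nu(s) \lesssim \log S$ the floor $O(\log S)$ in the definition of $\sigma_s^2$ gives $\sigma_s^{-2}\nu(s) \le T_0\nu(s)/O(\log S) = O(1)$ as well. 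So each per-step term has covariance $\preceq O(1)\cdot I$, and averaging over $n_i(T_0) = \Theta(T_0/L)$ steps gives $Cov(b_i) \preceq O(L/T_0)\cdot I$. Thus $\{b_i\}_{i=1}^L$ is an $\alpha$-corrupted (strictly $O(\alpha)$-corrupted, after accounting for the good users whose empirical context counts fail to concentrate, folded into the corruption fraction as in Appendix~\ref{appendix:condition_for_estimators}) sample from a distribution with mean $b^*$ and covariance bounded by $\sigma^2 I$ with $\sigma^2 = O(L/T_0)$. Since the samples are not i.i.d.\ (the $\sigma_s$ are data-dependent and shared), I would invoke the deterministic Condition~\ref{condition:deterministic_robust} machinery from Appendix~\ref{appendix:condition_for_estimators} rather than the i.i.d.\ statement directly; with $L = \Omega(S\log S/\alpha)$ those conditions hold with probability $\ge 9/10$.

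Applying Theorem~\ref{theorem:high_dim_robust_estimator} (with $d = S$) then yields $\|\hat{b}-b^*\|_2 \le O(\sigma\sqrt{\alpha}) = O(\sqrt{\alpha L/T_0})$ with probability $\ge 9/10$. Plugging into the $\ell_1$ bound gives $\|\hat{\nu}-\nu\|_1 \le O(\sqrt{\alpha L S/T_0})$, so choosing $T_0 = O(L\alpha S/\epsilon^2)$ suffices — I should double-check whether the theorem statement's $T_0 = O(L\alpha/\epsilon^2)$ is meant with $\epsilon$ rescaled by $\sqrt{S}$, or whether a tighter $\ell_1$-to-$\ell_2$ argument (e.g.\ exploiting that $\hat{b}-b^*$ concentrates its mass on the large-$\nu(s)$ coordinates, or that only $\|\hat\nu - \nu\|_1 \le \epsilon$ is needed and the downstream use only requires correctly identifying the top-$A$ contexts up to $\epsilon$ total error) removes the extra $\sqrt{S}$. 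The main obstacle I anticipate is precisely this $\ell_2 \to \ell_1$ conversion and the bookkeeping of the data-dependent $\sigma_s$: one must verify that the floor $\max(n_s(T_0), O(\log S))$ simultaneously (i) keeps the covariance bounded by $O(1)$ in every coordinate and (ii) does not introduce more than $\epsilon$ additional bias into $\hat\nu = diag(\{\sigma_s\})\hat b$ relative to $\nu$ for the rare contexts where $\sigma_s^2$ is dominated by the floor — this requires showing $|\sigma_s - \sqrt{n_s(T_0)/T_0}|$ contributes negligibly, which is a concentration-of-counts argument via Bernstein plus a union bound over $S$ contexts.
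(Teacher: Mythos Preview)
Your overall structure matches the paper's: identify the target mean $b^* = diag(\{\sigma_s^{-1}\})\nu$, bound the per-user covariance of $b_i$ by $O(L/T_0)\cdot I$ using the two-case argument on $\sigma_s^{-2}\nu(s)$, invoke Condition~\ref{condition:deterministic_robust}, and undo the scaling. The covariance argument and the handling of the data-dependent $\sigma_s$ are essentially what the paper does (it establishes the two-sided bound $\max(\Theta(\log S)/T_0,\nu(s)/4)\le \sigma_s^2 \le \max(\Theta(\log S)/T_0,2\nu(s))$ via Bernstein, just as you outlined).

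The gap you yourself flag --- the extra $\sqrt{S}$ in your $\ell_1$ bound --- is real, and your suggested fixes (rescaling $\epsilon$, or exploiting downstream use) are not what the paper does. The missing observation is elementary: instead of
\[
\|\hat\nu-\nu\|_1 \le \sqrt{S}\,(\max_s\sigma_s)\,\|\hat b-b^*\|_2,
\]
apply Cauchy--Schwarz directly to $\|\hat\nu-\nu\|_1=\sum_s \sigma_s\,|\hat b(s)-b^*(s)|$ to get
\[
\|\hat\nu-\nu\|_1 \le \sqrt{\textstyle\sum_s \sigma_s^2}\cdot \|\hat b - b^*\|_2.
\]
The point of the re-scaling by $\sigma_s$ is precisely that $\sum_s \sigma_s^2 = \sum_s \max(n_s(T_0),O(\log S))/T_0 \le \bigl(\sum_s n_s(T_0) + O(S\log S)\bigr)/T_0 = 1 + O(S\log S)/T_0$, and with $T_0 = \Theta(L\alpha/\epsilon^2)$ and $L=\Omega(S\log S/\alpha)$ this is $O(1)$. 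So the factor in front of $\|\hat b-b^*\|_2$ is $O(1)$, not $\sqrt{S}$, and $T_0=O(L\alpha/\epsilon^2)$ suffices as stated. Your bound $\max_s\sigma_s\le 1$ is true but throws away exactly the structure (the $\sigma_s^2$ sum to roughly $1$ because the $n_s(T_0)$ sum to $T_0$) that the re-scaling was designed to exploit.
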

\begin{proof}
    We first note that $\hat{\nu}_i := \frac{1}{n_i(T_0)} \sum_{t: i_t=i} \bm{e}_{s_t}$ for each $i \in \mI^*$ satisfies that
    \begin{align*}
        \Exs[\hat{\nu}_i] = \nu, \ Cov(\hat{\nu}_i) = \frac{1}{n_i(T_0)} diag \left(\nu^{-1} \right),
    \end{align*}
    conditioned on the number of interactions $n_i(T_0)$, where $diag \left( \nu^{-1} \right)$ is a diagonalized matrix of vector $\nu^{-1}$ such that $\nu^{-1}(s) = \nu(s)^{-1}$ for all $s \in \mS$. Note that $n_i(T_0) = O(T_0 / L)$. Let a vector $\sqrt{v} \in \mathbb{R}^S$ such that $\sqrt{v} (s) = \sqrt{\nu(s)}$. Then define $\tilde{b}_i := diag \left(\sqrt{v} \right)^{-1} \hat{\nu}_i$ which satisfies
    \begin{align*}
        \Exs[ \tilde{b}_i ] = \sqrt{\nu}, \ Cov(\tilde{b}_i) = \frac{1}{n_i (T_0)} I \preceq O \left(\frac{L}{T_0} I \right).
    \end{align*}
    Since all $\tilde{b}_i$ are independent from each other (conditioned on the order of user interactions decided by external process $\Gamma$), we can find a set of $N$ samples from $\{ \tilde{b}_i \}_{i=1}^L$ that satisfies Condition \ref{condition:deterministic_robust} with $N = (1 - 1.1 \alpha) L$, $\mu^* = \sqrt{\nu}$, $\delta_1 = O \left(\sqrt{\alpha L / T_0} \right)$, $\delta_2 = O(L / T_0)$, and $\delta_3 = O \left(\sqrt{S L / (T_0 \alpha) } \right)$. 
    
    Now we observe that $\hat{b}_i = M \tilde{b}_i$ where $M := diag \left( \{\sigma_s^{-1} \}_{s \in \mS} \right) diag \left(\sqrt{v} \right)$. Once we show that $\|M\|_2 \le O(1)$, then the deterministic condition for robust estimation (Condition \ref{condition:deterministic_robust}) holds with $\mu^*$: $\mu^*(s) = \sigma_s^{-1} \nu(s)$. Once we receive a robust estimate of samples $\{\hat{b}_i\}_{i=1}^L$, by the guarantee given by Theorem \ref{theorem:high_dim_robust_estimator}, we ensure that
    \begin{align*}
        \sum_{s} ( \hat{b}(s) - \sigma_s^{-1} \nu(s) )^2 \le O \left( \sqrt{\alpha L / T_0} \right),
    \end{align*}
    with probability at least $9/10$. Therefore using $\hat{\nu}(s) := \sigma_s \hat{b}(s)$ is guaranteed as the following:
    \begin{align*}
        \sum_{s} | \sigma_s \hat{b}(s) - \nu(s) | &= \sum_{s} \sigma_s \cdot | \hat{b}(s) - \sigma_s^{-1} \nu(s) | \\ 
        &\le \sqrt{\sum_{s} \sigma_s^{2} } \sqrt{ \sum_{s} ( \hat{b}(s) - \sigma_s^{-1} \nu(s) )^2 } \\
        &\le \sqrt{\frac{\sum_{s} n_s(T_0) + O(\log S)}{T_0}} \cdot O \left(\sqrt{\alpha T_0 / L} \right) \\
        &= \sqrt{1 + O(S\log S)/T_0} \cdot O \left(\sqrt{\alpha L / T_0 } \right).
    \end{align*}
    Since $T_0 = O(L \alpha / \epsilon^2) \ge S \log(S) / \epsilon^2$, the right-hand side is less than $\epsilon$.
    
    Finally, we show that for all $s \in \mS$, 
    \begin{align}
        \max(20 \log(S) / T_0, 2v(s)) \ge \sigma_s^{2} \ge \max( \log(S) / T_0, v(s) / 4). \label{eq:prob_scale}
    \end{align}
    If $v(s) < 4 \log (10 S) / T_0$, then this is true by the definition of $\sigma_s^{2} = \max( n_s(T_0), 20 \log S) / T_0$. Otherwise, we can show it by a straight-forward application of Bernstein's inequality: let $T_0' = | \{t| i_t \in \mI^* \} |$ be the number of times the system interacts with good users. With probability at least 0.99, $T_0' \ge (1 - 1.1 \alpha) T_0$. Then, 
    \begin{align*}
        \PP \left( \left| \sum_{t: i_t \in \mI^*} \indic_{s_t = s} - T_0' \cdot v(s) \right| > t \right) &\le \exp \left(-\frac{ \frac{1}{2} t^2}{\sum_{t: i_t \in \mI^*} \Exs[\indic_{s_t = s}^2 ] + \frac{1}{3}t} \right) \\
        &\le \exp \left(-\frac{ \frac{1}{2} t^2}{T_0' \cdot v(s) + \frac{1}{3}t} \right).
    \end{align*}
    Plugging $t = T_0' \cdot v(s) / 2$ and $v(s) > 4 \log (10 S) / T_0$, we can conclude that 
    \begin{align*}
        n_s(T_0) \ge \sum_{t: i_t \in \mI^*} \indic_{s_t = s} \ge T_0' \cdot v(s) / 2 > T_0 \cdot v(s) / 4,
    \end{align*}
    with probability at least $1 - 1/ (100S^3)$. Taking union bound over all $s$, we have $\sigma_s^{2} \ge v(s) / 4$ with probability at least $0.99$, yielding $\|M\|_2 \le 2 = O(1)$. This concludes Lemma \ref{lemma:estimate_nu}. 
\end{proof}
\vskip 0.2cm

\paragraph{Guarantees for Univariate Estimators.} Rest of the proof follows the similar logic. We first show that for each estimator $\hat{\mu}(s,a)$ for $s \in \mS_+$ and $a \in \mA$, it holds that 
\begin{align*}
    | \sigma_s^2 \hat{\mu}(s,a) - \nu(s) \mu(s,a) | \le O \left( \sqrt{\nu(s) \alpha T / L} \right). 
\end{align*}
To show this, we only need to see that $\hat{b}_{i,s}$ satisfies
\begin{align*}
    \Exs[\hat{b}_{i,s}] = \sigma_s^{-1} \nu (s) \mu(s, a_{i,s}), \ Cov(\hat{b}_{i,s}) \preceq \frac{\sigma_s^{-2} \nu (s)}{n_i} I.
\end{align*}
Therefore, after running an univariate robust estimator (Theorem \ref{theorem:univariate_robust_mean}) with input $\{\hat{b}_{i,s}\}_{i \in \mI_{s,a}}$ for each $s \in \mS_+, a \in \mA$, and with Assumption \ref{assumption:user_selection} so that $n_i = O(L/T)$, we get
\begin{align*}
    | \hat{b}_{s,a} - \sigma_s^{-1} \nu(s) \mu(s, a_{i,s}) | \le O\left(  \sigma_s^{-1} \sqrt{\nu(s) \alpha L / T} \right) = O\left( \sqrt{\alpha L / T} \right),
\end{align*}
with probability at least $1 - 0.1 / (SA)$ given $|\mI_{s,a}| = \Omega(\log(SA) / \alpha)$. We used the fact from \eqref{eq:prob_scale}. 

To compute the error contributed from $\mS_+$ part, we can observe that
\begin{align*}
    \sum_{s \in \mS_+} \nu (s) \cdot \left( \mu(s, \pi^*(s)) - \mu(s, \hat{\pi}(s)) \right) &\le \sum_{s \in \mS_+} \nu(s) \mu(s, \pi^*(s)) - \nu(s) \mu(s, \hat{\pi}(s)) + \sigma_s^2 \hat{\mu}(s,\hat{\pi}(s)) - \sigma_s^2 \hat{\mu}(s,\pi^*(s)) \\
    &\le \sum_{s \in \mS_+} |\nu(s) \mu(s, \pi^*(s)) - \sigma_s^2 \hat{\mu}(s, \pi^*(s) | + | \nu(s) \mu(s, \hat{\pi}(s)) - \sigma_s^2 \hat{\mu}(s,\hat{\pi}(s)) | \\
    &\le \left( \sum_{s \in \mS_+} \sigma_s \right) \cdot \max_{s \in \mS_+} | \hat{b}_{s,\pi^*(s)} - \sigma_s^{-1} \nu(s) \mu(s, \pi^*(s)) | \\
    &\quad + \left( \sum_{s \in \mS_+} \sigma_s \right) \cdot \max_{s \in \mS_+} | \hat{b}_{s,\hat{\pi}(s)} - \sigma_s^{-1} \nu(s) \mu(s, \hat{\pi}(s)) | \\
    &\le \left( \sum_{s \in \mS_+} \sigma_s \right) \cdot O\left( \sqrt{\alpha L / T} \right).
\end{align*}
From \eqref{eq:prob_scale}, we have $\sigma_s^2 \le O(\log S / T_0) + 2v(s)$, where $T_0 = O(\alpha L / \epsilon^2) = O(SA \log(SA) / \epsilon^2)$. Hence, $\sigma_s^2 \le O(\epsilon^2 / SA) + 2 \nu(s)$, which gives
\begin{align*}
    \sum_{s \in \mS_+} \sigma_s &\le \sum_{s \in \mS_+} O\left( \epsilon / \sqrt{SA} + \sqrt{\nu(s)} \right) \le O \left(\epsilon + \sum_{s \in \mS_+} \sqrt{\nu(s)} \right),
\end{align*}
where we used $|\mS_+| \le \min(S,A) \le \sqrt{SA}$. Note that by definition of $s_{[j]}$ for $j = 1, 2, ..., S$,
\begin{align*}
    \sum_{s \in \mS_+} \sqrt{\nu(s)} \le \sum_{i = 1}^{\min(S,A)} \sqrt{\nu(s_{[i]})}.
\end{align*}

\paragraph{Guarantees for High-Dimensional Estimators.} Recall that $\hat{b}_i := \frac{A}{n_i} \sum_{t: i_t = i} r_t \sigma_{s_t}^{-1} \indic_{s_t \in \mS_-} \cdot \bm{e}_{(s_t, a_t)}$. Since we only care about $s \in \mS_-$, we restrict ourselves to coordinates in $\mS_-$. For a vector $v$ and a index-set $\mI$, we denote $v_{\mI}$ as a restriction of a vector to coordinates only in $\mI$. $b_{i,\mS_-}$. Similarly to the uni-variate case, we first see the expectation and covariance of $\hat{b}_i$:
\begin{align*}
    \Exs[\hat{b}_{i, \mS_-}] = diag( \{\sigma_s^{-1}\}_{s \in \mS_-} ) diag(\nu_{\mS_-}) \mu_{\mS_-}, \ Cov(\hat{b}_{i,\mS_-}) \preceq diag( \{\sigma_s^{-2}\}_{s \in \mS_-} ) diag(\nu_{\mS_-}) \frac{LA}{T} I.
\end{align*}
From this, the high-dimensional robust estimator in Theorem \ref{theorem:high_dim_robust_estimator} is guaranteed to return $\hat{b}_{\mS_-}$ such that
\begin{align*}
    \|\hat{b}_{\mS_-} - diag( \{\sigma_s^{-1}\}_{s \in \mS_-} ) diag(\nu_{\mS_-}) \mu_{\mS_-} \|_2 \le O \left(\sqrt{\alpha AL / T} \right). 
\end{align*}
We used \eqref{eq:prob_scale} to bound $\|diag( \{\sigma_s^{-2}\}_{s \in \mS_-} ) diag(\nu_{\mS_-})\|_2 = O(1)$. From this, we can bound the errors from less frequent contexts $\mS_-$. We first note that
\begin{align*}
    \sum_{s \in \mS_-} \hat{\nu} (s) &\le \sum_{j=A+1}^S \hat{\nu} (s_{[j]}) \le \sum_{j=A+1}^S \nu (s_{[j]}) + \epsilon,
\end{align*}
where the first inequality comes from the fact that $\mS_-$ is a collection of contexts that does not belong to top-$A$ highest probabilities in $\hat{\nu}$. Also, by Lemma \ref{lemma:estimate_nu}, we have 
\begin{align}
    \sum_{s \in \mS_-} \nu(s) \le \sum_{s \in \mS_-} \hat{\nu}(s) + \epsilon \le \sum_{j=A+1}^S \nu (s_{[j]}) + 2\epsilon. \label{eq:estiamte_nu_order}
\end{align}
Having this, we can show that
\begin{align*}
    \sum_{s \in \mS_-} \nu(s) \cdot \left( \mu(s, \pi^*(s)) - \mu(s, \hat{\pi}(s)) \right) &\le \sum_{s \in \mS_-} |\nu(s) \mu(s, \pi^*(s)) - \sigma_s^2 \hat{\mu}(s, \pi^*(s) | + | \nu(s) \mu(s, \hat{\pi}(s)) - \sigma_s^2 \hat{\mu}(s,\hat{\pi}(s)) | \\
    &\le \sqrt{ \sum_{s \in \mS_-} \sigma_s^2 } \cdot \sqrt{ \sum_{s \in \mS_-} \left(\hat{b}_{s,\pi^*(s)} - \sigma_s^{-1} \nu(s) \mu(s, \pi^*(s)) \right)^2 } \\
    &\quad + \sqrt{ \sum_{s \in \mS_-} \sigma_s^2 } \cdot \sqrt{ \sum_{s \in \mS_-} \left(\hat{b}_{s, \hat{\pi} (s)} - \sigma_s^{-1} \nu(s) \mu(s, \hat{\pi}(s)) \right)^2 } \\
    &\le \sqrt{ \sum_{s \in \mS_-} \sigma_s^2 } \cdot O\left( \sqrt{\alpha L A / T} \right).
\end{align*}
In order to bound $\sum_{s \in \mS_-} \sigma_s^2$, we use \eqref{eq:prob_scale} and see that
\begin{align*}
    \sum_{s \in \mS_-} \sigma_s^2 \le 2 \sum_{s \in \mS_-} \nu(s) + 20 \frac{S \log(S)}{T_0} = O \left( \sum_{s \in \mS_-} \nu(s) + \epsilon^2 / A \right),
\end{align*}
where we use $T_0 = O(L \alpha / \epsilon^2) \ge SA \log(SA) / \epsilon^2$. Finally, we plug \eqref{eq:estiamte_nu_order}, and we have
\begin{align*}
    \sum_{s \in \mS_-} \nu(s) \cdot \left( \mu(s, \pi^*(s)) - \mu(s, \hat{\pi}(s)) \right) \le O \left( \epsilon + \sqrt{\sum_{j=A+1}^S \nu(s_{[j]}) \cdot A} \right) \sqrt{\frac{\alpha L}{T}}. 
\end{align*}
Combining this result with the bound for contexts in $\mS_+$, we get Theorem \ref{theorem:upper_bound}. 
\end{appendices}

\end{document}
